\newtheorem{proposition}{Proposition}
\title{Dropout Prompt Learning: Towards Robust and Adaptive Vision-Language Models}
\author{
    %Authors
    % All authors must be in the same font size and format.
    Biao Chen,
    Lin Zuo\thanks{Corresponding author.},
    Mengmeng Jing, Kunbin He, Yuchen Wang
}
\begin{document}
\maketitle

\begin{abstract}
Dropout is a widely used regularization technique which improves the generalization ability of a model by randomly dropping neurons. In light of this, we propose Dropout Prompt Learning, which aims for applying dropout to improve the robustness of the vision-language models. Different from the vanilla dropout, we apply dropout on the tokens of the textual and visual branches, where we evaluate the token significance considering both intra-modal context and inter-modal alignment, enabling flexible dropout probabilities for each token. Moreover, to maintain semantic alignment for general knowledge transfer while encouraging the diverse representations that dropout introduces, we further propose residual entropy regularization. Experiments on 15 benchmarks show our method's effectiveness in challenging scenarios like low-shot learning, long-tail classification, and out-of-distribution generalization. Notably, our method surpasses regularization-based methods including KgCoOp by 5.10\% and PromptSRC by 2.13\% in performance on base-to-novel generalization. Our code is available at \url{https://github.com/JustCoolPig/DroPLe}.
\end{abstract}
    
\section{Introduction}
\label{sec:intro}
Vision-Language Models (VLMs) such as CLIP~\cite{radford2021learning} and ALIGN~\cite{jia2021scaling} have achieved remarkable advantages in zero-shot scenarios. While prompt learning~\cite{zhou2022learning, khattak2023maple} offers a parameter-efficient approach for adapting pre-trained VLMs to downstream tasks, its generalization capability remains limited by overfitting issues, particularly in low-data scenarios~\cite{park2024prompt,khattak2023self}.

\begin{figure}
    \centering    \includegraphics[width=1\linewidth]{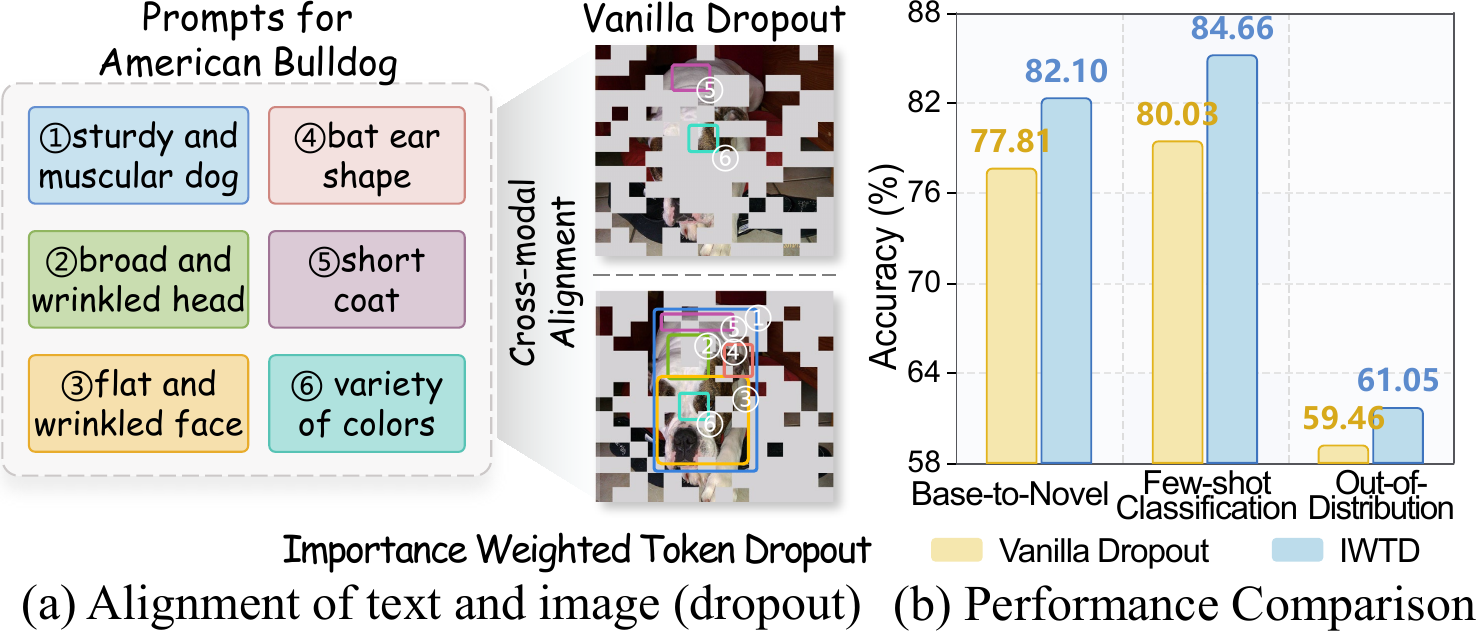}
    \caption{(a) Vanilla dropout randomly removes visual tokens, disrupting image-text alignment (top). Importance Weighted Token Dropout preserves semantically relevant tokens for alignment (bottom). (b) Comparison on base-to-novel, out-of-distribution generalization and few-shot image classification.}
    \label{motivation}
    % \vspace{-3pt}
\end{figure}
In the past decade, dropout is applied as an effective regularization technique in deep neural networks, significantly mitigating overfitting and improving generalization by randomly dropping neurons~during training~\cite{srivastava2014dropout}. Dropout prevents complex co-adaptations among feature detectors and implicitly averages over an exponential number of thinned network architectures, a critical factor in the success of models such as AlexNet~\cite{krizhevsky2012imagenet}. While dropout has shown remarkable success across various deep learning architectures, its potential in prompt learning for VLMs remains unexplored. Motivated by the effectiveness of dropout in learning robust models, we propose to incorporate dropout mechanisms into VLM prompt learning to enhance model generalization, particularly in low-data regimes~\cite{zhou2022conditional,zhu2023prompt,chen2025chain}.

However, VLM prompt learning presents distinct challenges compared to traditional deep learning, raising three critical questions regarding dropout implementation:
(1) \textit{Where to drop}: VLMs rely on tokens as fundamental semantic units to facilitate fine-grained semantic alignment across modalities (e.g., the contrastive learning mechanism of CLIP~\cite{radford2021learning}), while vanilla dropout would destroy this alignment.
As shown in Fig.~\ref{motivation}(a), randomly dropping critical visual tokens impairs their matching with textual descriptions, leading to degraded performance (Fig.~\ref{motivation}(b)). While existing works in unimodal tasks~\cite{ke2020group,zhai2018adaptive} improve the vanilla dropout through adaptive probabilities, these approaches are unsuitable for VLMs requiring cross-modal dependencies.
(2) \textit{What degree to drop}: Unlike traditional neural networks where high parameter redundancy enables effective dropout without performance loss, VLMs process semantically dense tokens with limited token-level redundancy. The inherent token sparsity means that high dropout ratios on semantically rich tokens could severely degrade performance, while low ratios on less informative tokens provide insufficient regularization. This creates a challenge in determining optimal dropout scheduling that balances feature preservation with regularization.
(3) \textit{How to learn from dropout}: To prevent semantic drift between learnable and frozen branches, existing approaches~\cite{yao2023visual,khattak2023self} often enforce strict $\mathcal{L}_1$ or $\mathcal{L}_2$ regularization. This operation, however, is overly strict for dropout prompt learning. Such strict constraints limit the benefits from dropout-induced variations, suggesting the need for a mechanism that balances consistency and diversity.

Towards more robust and general prompt learning, we propose \textbf{Dropout Prompt Learning}, a principled framework that incorporates dropout mechanisms into vision-language prompt learning by regularizing through token dropout. Based on the framework of dropout prompt learning, we present the \textit{Importance Weighted Token Dropout}, termed as IWTD, which formulates dropout as a token importance estimation problem in the multimodal space. 

Importance weighted token dropout is carefully designed to handle the three challenges. For the first challenge of \textit{where to drop}, we leverage a comprehensive importance metric to jointly model intra-modal context, inter-modal alignment, and task-specific relevance through a unified attention mechanism. This enables the identification of semantically critical tokens that maintain cross-modal alignment.
For the second challenge of \textit{what degree to drop}, different samples exhibit varying semantic densities in their tokens. Tokens carrying minimal semantic information can tolerate higher dropout rates for enhancing generalization, while samples with high semantic density require lower dropout rates to preserve crucial tokens for cross-modal alignment. This motivates flexible dropout probability assignment according to token significance.
For the third challenge of \textit{how to learn from dropout}, we propose residual entropy regularization, which computes residuals between pre- and post-dropout feature representations, and maximizes the predictive entropy on these residuals, simultaneously maintaining alignment with general knowledge transfer while encouraging representational diversity.
Our main contributions are as follows:
\begin{itemize}
% [leftmargin=*,itemsep=1.2ex,parsep=0ex,label=$\bullet$]
  \item We propose Dropout Prompt Learning, a novel learning paradigm that extends dropout regularization to vision-language model adaptation. By introducing token-level dropout strategies, this framework enhances model generalization ability while maintaining cross-modal alignment.
  \item We present importance weighted token dropout, an effective implementation of dropout prompt learning. It dynamically adjusts dropout probabilities by jointly considering intra-modal context and cross-modal alignment. The residual entropy regularization is further adopted to maintain semantic alignment for general knowledge transfer while encouraging diverse feature representations.
  \item Extensive experiments on 15 benchmark datasets comprehensively validate the robustness and superior performance of the proposed method under various challenging settings, including low-shot learning, long-tail classification, and out-of-distribution generalization.
\end{itemize}

% Adaptive dropout, with dynamic probability adjustment, may enable more controllable regularization while preserving critical information. For instance, Ke et al. propose adaptive dropout probabilities based on feature density distributions to mitigate semantic drift~\cite{ke2020group}. However, existing strategies~\cite{ke2020group,zhai2018adaptive,lu2024geal}, limited by their unimodal focus, cannot capture the cross-modal dependencies crucial for VLM alignment and interaction.
% To this end, we propose an adaptive dropout strategy for multi-modal prompt learning in VLMs, termed \textit{Dropout Prompt Learning}. Specifically, we propose \textbf{Multimodal Adaptive Token Dropout}, which evaluates token importance in multimodal interactions through a novel importance metric that considers both intra-modal context and cross-modal alignment, enabling targeted regularization via adaptive dropout probabilities. 

% 目前在CLIP+LLM的工作中为了应对LLM不够准确的描述都是通过优化LLM的可读文本输出，以保证输入CLIP时的可读文本描述越完美越好[CuPL，CLIP+A，ProAPO]。它们没有让backbone本身具备处理这种由LLM引入的偏差描述的能力。此外这种方式过度依赖前序的准备，且往往伴随精心设计的指令和复杂的规则。另外，LLM包含丰富文本知识和强大上下文处理能力，仅用来生成可读文本而没有作为backbone的一部分就没能充分利用LLM。

%约束的话，只要让学出来的特征和原始特征之间的残差不具备类别属性即可。

% 先说明下的我的主要贡献(具体实现可见OCRClip_core_files)：1. 我参考了MADTP的方式(参考MADTP不能说出来)，对我的基于大模型+CLIP的模型进行token评估，然后筛选出大模型映射的token中不重要的token加上dropout(不要提到高斯)。对于视觉模态，我也对除去可学token以外的token进行了重要性筛选，对不重要的token加上dropout。2. 最终学出来的文本\视觉特征分别和未经任何处理的文本\视觉特征计算残差并计算OCR损失。 
\section{Related Work}
\label{related_work}
\noindent\textbf{Prompt Learning in VLMs. }
% \paragraph{Prompt Learning in VLMs.}
Prompt learning has evolved from NLP~\cite{li2021prefix} to VLMs~\cite{lu2022prompt,zhou2022conditional}, with CoOp~\cite{zhou2022learning} introducing soft prompts for CLIP. Recent advances explore multimodal prompting~\cite{khattak2023maple,cho2023distribution}. However, limited data often leads to overfitting~\cite{khattak2023self,park2024prompt}, inspiring various regularization methods: ProGrad~\cite{zhu2023prompt} aligns prompt gradients with general knowledge. KgCoOp~\cite{yao2023visual} minimizes distance between learned and hand-crafted embeddings. PSRC~\cite{khattak2023self} uses self-regularization through mutual agreement, and ProMetaR~\cite{park2024prompt} employs meta-learning with task augmentation.
GalLoP~\cite{lafon2024gallop} applies dropout on multiple complete candidate texts to enhance diversity, yet this coarse-grained operation differs from dropout's principle of fine-grained dropping of individual units.
Despite these approaches, the potential of dropout in the prompt learning remains underexplored. To address this limitation, we propose a token-level adaptive dropout framework for robust prompt learning in VLMs.

\paragraph{Dropout regularization.}
Vanilla Dropout~\cite{srivastava2014dropout} and its variants like DropConnect~\cite{wan2013regularization}, DropBlock~\cite{ghiasi2018dropblock}, and Curriculum Dropout~\cite{morerio2017curriculum} apply fixed dropout probabilities during training. However, static dropout rates cannot adapt to varying feature importance across inputs and layers. This limitation motivates adaptive dropout methods that dynamically adjust probabilities. StandOut~\cite{ba2013adaptive} pioneered input-dependent rate learning via auxiliary networks, while subsequent work leveraged Rademacher complexity~\cite{zhai2018adaptive} and feature distributions~\cite{ke2020group}. Recent works include attention mechanisms~\cite{yang2022ad} and GFlowNet~\cite{liu2023gflowout} which learns data-dependent dropout masks via posterior inference. However, adaptive dropout remains underexplored in multimodal settings such as VLMs. Thus, we propose a multimodal importance metric that jointly considers intra-modal context and inter-modal alignment for adaptive dropout. We also compare with unimodal adaptive dropout in Appendix.

\paragraph{Consistency regularization.}
Consistency regularization preserves model generalization by minimizing discrepancies between learned and reference features. 
Common approaches maintain consistency via feature alignment using $\mathcal{L}_1$ or $\mathcal{L}_2$ norms~\cite{laine2016temporal,tarvainen2017mean,sajjadi2016regularization} or cosine similarity~\cite{hoe2021one}, or through distribution matching with KL divergence~\cite{li2018conversational}. These methods have proven effective in enhancing model robustness~\cite{wang2021regularizing}.
Consistency regularization also effectively mitigates overfitting and knowledge forgetting in VLM prompt learning. Methods like KgCoOp~\cite{yao2023visual}, PSRC~\cite{khattak2023self}, and CoPrompt~\cite{CoPrompt} work by constraining learnable prompts using references such as frozen CLIP features.
While beneficial for generalization, enforcing strict consistency between multi-source features often limits model flexibility. Our residual entropy regularization relaxes this strict constraint, enabling the model to balance semantic alignment with the diverse representations introduced by adaptive dropout.

% 重要性引导的Dropout，不要讲普通dropout了。AAAI2020那篇论文的introduction部分有提到自适应dropout概率

% 要说明目前在CLIP+LLM的工作中为了应对LLM不够准确的描述是通过优化LLM的可读文本输出，以保证输入CLIP时可读文本描述越完美越好，这太严格了。它们没有让视觉语言模型本身具备处理这种由LLM引入的偏差描述的能力。此外这种方式过度依赖前序的准备，且往往伴随精心设计的指令和复杂的规则，换个大模型都不一定适用。

% 重要性引导的Dropout，不要讲普通dropout了

% token重要性评估有哪些方法，需要介绍MADTP和CrossGET

%一致性正则化约束的VLMs和OCR损失

\begin{figure*}
    \centering    \includegraphics[width=0.85\linewidth]{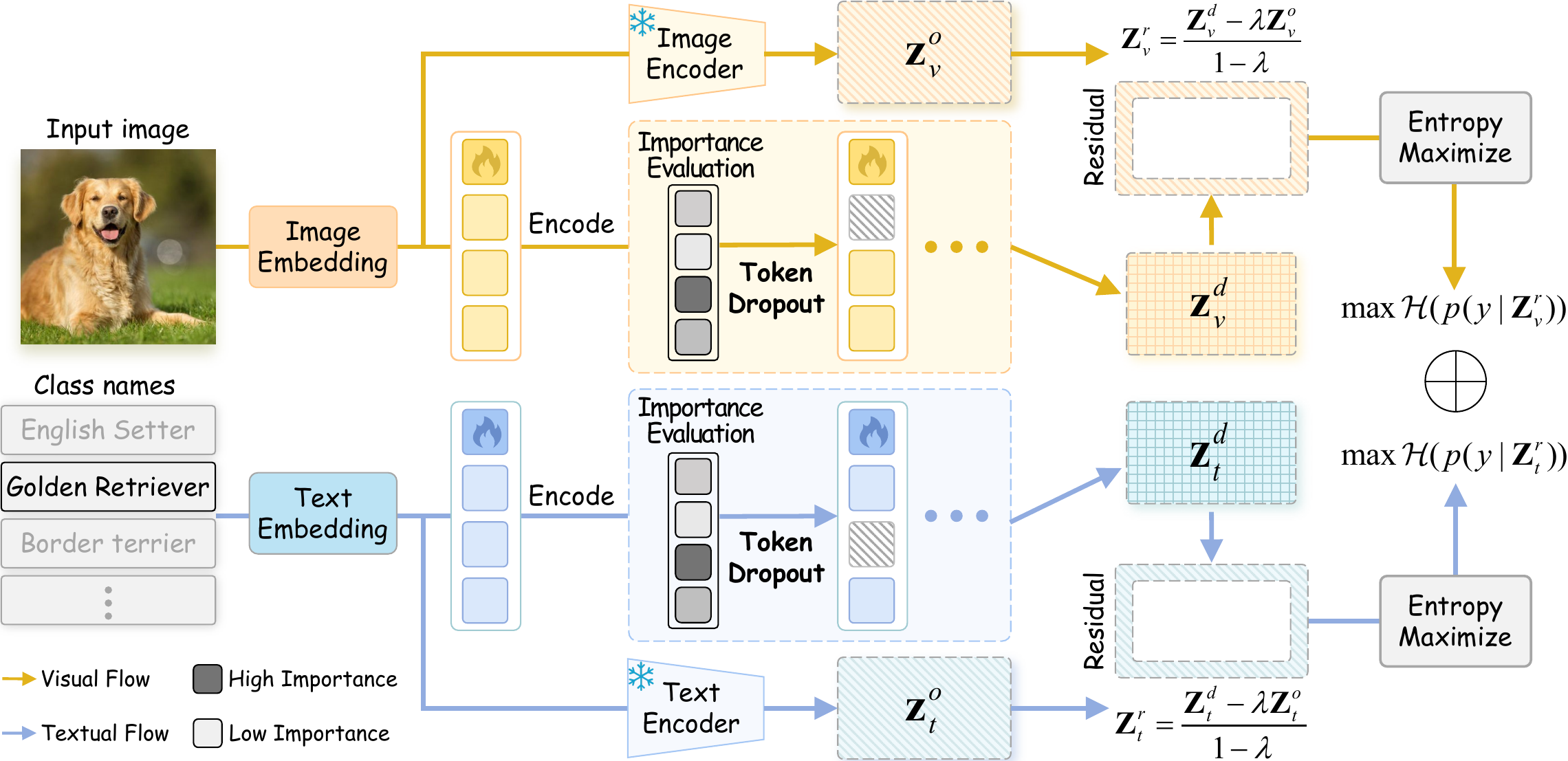}
    % \vspace{-5pt}
    \caption{\textbf{Method overview of Importance Weighted Token Dropout.} Textual and visual modalities are processed by parallel encoding pathways, a frozen branch and a learnable branch. In the learnable branch, we compute an intra-/inter-modal importance metric for tokens at each layer, which guides adaptive token dropout. Then, residual features derive from learnable and frozen branch differences. Finally, maximizing entropy constrains dropout for both visual and textual residuals.}
    \label{pipeline}
\end{figure*}
% \clearpage
\section{Methodology}\label{method}
% Dropout prompt learning is designed for the VLMs. Without loss of generality, we adopt the CLIP model to elaborate the mechanism of the proposed dropout prompt learning.
\subsection{Preliminaries}\label{Preliminaries}
\paragraph{CLIP.}
Contrastive Language-Image Pre-training (CLIP) \cite{radford2021learning} uses image ($\mathcal{F}$) and text ($\mathcal{G}$) encoders to produce aligned features ($\mathbf{z}_v = \mathcal{F}(x), \mathbf{z}_t = \mathcal{G}(t)$) via contrastive loss.
For K-way zero-shot classification, image features $\mathbf{z}_v$ are compared against $K$ class text features $\{\mathbf{z}_{t,k}\}_{k=1}^K$ using cosine similarity and a softmax function with temperature $\tau$ to compute probabilities $p(y=k|x)$:
\begin{equation}
p(y=k|x) = \frac{\exp(\text{sim}(\mathbf{z}_v, \mathbf{z}_{t,k}) / \tau)}{\sum_{j=1}^K \exp(\text{sim}(\mathbf{z}_v, \mathbf{z}_{t,j}) / \tau)}.
\label{clip}
\end{equation}
\paragraph{Prompt learning.}
Prompt learning optimizes VLMs by incorporating learnable prompts instead of full fine-tuning \cite{zhou2022learning,khattak2023maple}. The text and visual input sequences at layer $i$ are defined as:
$T_{input}^{(i)} = \{t_{bos}, P_{t}^{(i)}, T_{embed}, t_{eos}\}$ and 
$V_{input}^{(i)} = \{v_{cls}, E_{patch}, P_{v}^{(i)}\}$, 
where $P_{t}^{(i)} = \{p_t^1, p_t^2, ..., p_t^\eta\}$ and $P_{v}^{(i)} = \{p_v^1, p_v^2, ..., p_v^M\}$ are learnable prompts with dimensions $\mathbb{R}^{\eta}$ and $\mathbb{R}^{M}$.
% \end{sloppypar}

\paragraph{Adaptive Dropout.}
Adaptive dropout dynamically adjusts dropout probabilities based on feature importance rather than fixed rates. For unit $i$ with importance score $I_i$, the dropout probability is $p_i = f(I_i)$, where $f(\cdot)$ maps importance scores to dropout probabilities. The dropout operation then follows:
\begin{equation}
    \boldsymbol{\mu}_i \sim \text{Bernoulli}(1-p_i), \quad \boldsymbol{\phi}_i = (\mathbb{W}_i\boldsymbol{\theta}) \odot \boldsymbol{\mu}_i/(1-p_i),
\end{equation}
where $1/(1-p_i)$ maintains expected output during training.

\subsection{Dropout Prompt Learning}\label{sec:DroPLe}
Dropout prompt learning extends dropout principles to VLMs by applying dropout at the token level to enhance robustness and generalization. 
% Let $U^{(i)} \in \{V_{\text{input}}^{(i)}, T_{\text{input}}^{(i)}\}$ denote the visual or textual token sequence at layer $i$. Dropout prompt learning applies the dropout operation $\mathcal{D}_{\text{token}}$:
Let \scalebox{0.95}{$U^{(i)} \in \{V^{(i)}_{\text{input}}, T^{(i)}_{\text{input}}\}$} denote the visual or textual token sequence at layer $i$, where \scalebox{0.95}{$V^{(i)}_{\text{input}} = \{v_{\text{cls}}, E_{\text{patch}}, P^{(i)}_v\}$} and \scalebox{0.95}{$T^{(i)}_{\text{input}} = \{t_{\text{bos}}, P^{(i)}_t, T_{\text{embed}}, t_{\text{eos}}\}$}. Here, \scalebox{0.95}{$P^{(i)}_v$} and \scalebox{0.95}{$P^{(i)}_t$} are learnable prompts. For training phase, Dropout prompt learning applies the dropout operation $\mathcal{D}_{\text{token}}$ through the following formulation:
\begin{equation}
\begin{aligned}
U_{\text{dropped}}^{(i)} = f_{\text{CLIP}}(&\mathcal{D}_{\text{token}}(\text{Enc}(U^{(i)}; \theta_e) \odot \mathcal{B}(p)_{1 \times n}); \theta_{\text{CLIP}}),
\end{aligned}
\end{equation}
where $\text{Enc}(\cdot; \theta_e)$ is the modality-specific encoder with parameters $\theta_e$, $\mathcal{B}(p)_{1 \times n}$ denotes an $n$-dimensional vector of independent Bernoulli random variables with dropout rate $p$, and $f_{\text{CLIP}}(\cdot; \theta_{\text{CLIP}})$ denotes the CLIP model with parameters $\theta_{\text{CLIP}}$.
During inference, following the standard dropout protocol \cite{srivastava2014dropout}, dropout is disabled and we directly apply the trained model:
\begin{equation}
U_{\text{infer}}^{(i)} = f_{\text{CLIP}}(\text{Enc}(U^{(i)}; \theta_e); \theta_{\text{CLIP}}).
\end{equation}

Nevertheless, vanilla token dropout can disrupt cross-modal semantic alignment in VLMs, potentially degrading performance by randomly dropping critical visual or textual tokens from $U^{(i)}$. To address this, dropout prompt learning requires dropout strategies $\mathcal{D}_{\text{token}}$ designed explicitly for VLMs. We propose importance weighted token dropout (IWTD), which evaluates token significance from multiple perspectives to guide adaptive dropout. As shown in Fig.~\ref{pipeline}, IWTD measures token importance from various aspects and employs these to assign flexible dropout probabilities. Furthermore, it incorporates residual entropy regularization to constrain the adaptive token dropout process.
% Pre-trained VLMs are susceptible to overfitting when adapted through prompt learning with limited samples. Dropout, despite being a recognized regularization technique, is ineffective in this scenario. To overcome these issues, we propose a novel token-level dropout for prompt learning, namely Dropout prompt learning. The framework, as depicted in Fig.~\ref{pipeline}, primarily consists of two key modules: first, \textit{importance weighted token dropout}, where we evaluate token significance from multiple perspectives and utilize the generated probabilities to guide the adaptive dropout. Second, \textit{residual entropy regularization} constrains the adaptive dropout process.
\subsection{Importance Weighted Token Dropout (IWTD)}\label{sec:iwtd}
Based on the dropout prompt learning framework, we propose importance weighted token dropout. This assigns dropout probabilities via a multimodal importance metric instead of uniform randomness, preserving tokens critical for cross-modal alignment while enabling effective regularization.
% To address the issue that vanilla dropout's randomness disrupts critical tokens needed for text-visual modality alignment, which reduces generalization instead of improving it, we propose Importance Weighted Token Dropout, which performs token dropout guided by a Multimodal Importance Metric.

% To mitigate the challenges as discussed in Sec.~\ref{Preliminaries}, we propose Importance Weighted Token Dropout, which performs token dropout guided by a specifically designed Multimodal Importance Metric.

% First, we specify the input sequences for the VLM encoders. Notably, we integrate the LLM as part of the backbone, mapping its output tokens into the textual processing. 
\paragraph{Multimodal Importance Metric.}
The metric is termed as $I(\mathbf{x}^{(i)}_j)$, which quantifies the significance of a token $\mathbf{x}^{(i)}_j$ at layer $i$ from multiple sources:
\begin{equation}
    I(\mathbf{x}^{(i)}_j) = f\left( S_{cls}^{(i)}(j), S_{self}^{(i)}(j), S_{cross}^{(i)}(j) \right),
    \label{eq:importance_metric}
\end{equation}
where $j$ indexes the tokens in the sequence of length $L$ at layer $i$, and $f(\cdot)$ is an averaging function. $S_{cls}$, $S_{self}$, and $S_{cross}$ denote the class attention, self-attention and cross-modal attention score respectively, each defined as follows.

First, to capture intra-modal relationships, we introduce the Self-Attention Score ($S_{self}$), which quantifies token interactions within its modality. Given the self-attention tensor $\mathbf{A}_{self}^{(i)} \in \mathbb{R}^{B \times H \times L \times L}$ at layer $i$, where $B$ and $H$ denote batch size and number of attention heads, we compute:
\begin{equation}
S_{self}^{(i)}(j) = \frac{1}{H} \sum_{h=1}^H \max_{k \neq j} \left( (\mathbf{A}_{self}^{(i)})_{b,h,j,k} \right),
\label{eq}
\end{equation}
where token $k$ excludes global tokens $[v_{cls}]$ and $[t_{eos}]$.

% First, the Self-Attention Score ($S_{self}$), from the self-attention map $\mathbf{A}_{self}^{(i)} \in \mathbb{R}^{B \times H \times L \times L}$, measures intra-modal importance. It is the head-averaged maximum attention from token $j$ to any other token $k$, excluding special tokens ($[v_{cls}]$ or $[t_{eos}]$) which serve as global sequence representations:
% \begin{equation}
%     S_{self}^{(i)}(j) = \frac{1}{H} \sum_{h=1}^H \max_{k \neq j, k \in \text{non-special}} \left( (\mathbf{A}_{self}^{(i)})_{b, h, j, k} \right).
%     \label{eq:score_self}
% \end{equation}
Second, to capture intra-modal task-specific importance of tokens, we leverage the attention patterns of the primary task-specific token (i.e., $[v_{cls}]$ for vision and $[t_{eos}]$ for text). The Class Attention Score ($S_{cls}$) is defined as:
\begin{equation}
    S_{cls}^{(i)}(j) = \frac{1}{H} \sum_{h=1}^H \left( (\mathbf{A}_{self}^{(i)})_{b,h,cls,j} \right)
    \label{eq:score_cls}.
\end{equation}

Since task-specific tokens aggregate intra-modal information, their attention reflects each token's task relevance.

% Second, the Class Attention Score ($S_{cls}$) gauges relevance to the primary task-specific token ($[v_{cls}]$ or $[t_{eos}]$) at position $\text{idx}_{\text{task}}$ (0 or $L-1$):
% \begin{equation}
%     S_{cls}^{(i)}(j) = \frac{1}{H} \sum_{h=1}^H \left( (\mathbf{A}_{self}^{(i)})_{b, h, \text{idx}_{\text{task}}, j} \right).
%     \label{eq:score_cls}
% \end{equation}

Third, in vision-language tasks where cross-modal semantic alignment is essential for understanding inter-modality interactions, we propose the Cross-modal Attention Score ($S_{cross}$) to quantify token importance via cross-modal alignment. Specifically, as shown in Fig.~\ref{mim_metric}, given visual tokens
\scalebox{1.0}{$V_{input}^{(i)} \in \mathbb{R}^{N' \times D_v}$} and textual tokens \scalebox{1.0}{$T_{input}^{(i)} \in \mathbb{R}^{M' \times D_t}$} at the $i$-th layer, we employ linear projections to map these tokens into a shared $d$-dimensional semantic space, yielding $\mathbf{V}'^{(i)} \in \mathbb{R}^{N' \times d}$ and $\mathbf{T}'^{(i)} \in \mathbb{R}^{M' \times d}$ respectively.
To facilitate cross-modal interaction, we introduce $\xi$ learnable bridge tokens $\mathbf{E} \in \mathbb{R}^{\xi \times d}$ as semantic anchors, reducing complexity from direct cross-modal attention $\mathcal{O}(N' \times M')$ to $\mathcal{O}(\xi \times (N' + M'))$. We then compute attention maps $\mathbf{A}^\mathcal{M}_{cross}$ between these bridge tokens and projected modality-specific features. For each modality $\mathcal{M} \in \{v, t\}$ with projected features $\mathbf{X}'^{(i)}$ (denoting either $\mathbf{V}'^{(i)}$ or $\mathbf{T}'^{(i)}$),
\begin{equation}
 \mathbf{A}^\mathcal{M}_{cross} = \text{softmax}\left( \frac{\mathbf{E} (\mathbf{X}'^{(i)})^\top}{\sqrt{d}} \right), \quad \text{for } \mathcal{M} \in \{v, t\}. \label{eq:attn_cross_unified}
\end{equation}

\begin{figure}%[h]
    \centering \includegraphics[width=0.48\textwidth]{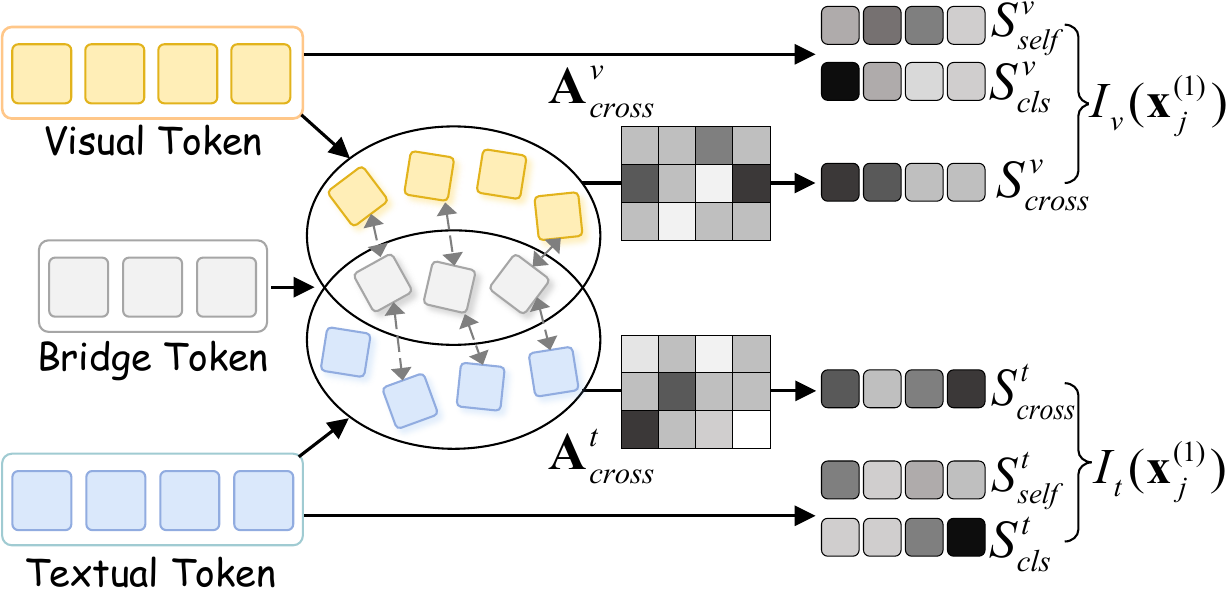}
  % \vspace{-5pt}
  \caption{\textbf{Multimodal Importance Metric}, which simultaneously considers both intra-modal attention $S_{self}$, $S_{cls}$ and inter-modal attention $S_{cross}$.}
  \label{mim_metric}
\end{figure}

Based on the computed attention maps, we define the cross-modal importance score $S^\mathcal{M}_{cross}(j)$ for each token $j$ in modality $\mathcal{M}$ as the maximum attention weight received from any bridge token $\kappa$. This score quantifies each token's engagement with the cross-modal bridge tokens:
\begin{equation}
   S^\mathcal{M}_{cross}(j) = \max_{\kappa \in \{1, \dots, \xi\}} \left( (\mathbf{A}^\mathcal{M}_{cross})_{\kappa,j} \right).
\end{equation}

For each modality $\mathcal{M}$, we denote $S^\mathcal{M}_{cross}(j)$ as $S_{cross}^{(i)}(j)$ at layer $i$, which combines with other importance measures to form our final token importance metric $I(\mathbf{x}^{(i)}_j)$. This unified metric guides the adaptive token dropout process.

\paragraph{Token Dropout.}
Let $I(\mathbf{x}^{(i)}_j)$ be the importance score for token $\mathbf{x}^{(i)}_j$ at layer $i$, and $\mathcal{J}_{target}^{(i)}$ denote the indices of token excluding $[v_{cls}]$ and $[t_{eos}]$ which are essential for cross-modal similarity computation. Our objective is to adaptively adjust token dropout, where the dropout probability $p_j$ of each token is inversely mapped from its normalized importance score:
\begin{equation}
    p_j = p_{max} - \hat{I}(\mathbf{x}^{(i)}_j)(p_{max} - p_{min}), \quad j \in \mathcal{J}_{target}^{(i)},
\label{eq:probability_p_j}
\end{equation}
where $\hat{I}(\mathbf{x}^{(i)}_j)$ is the normalized importance score, and $p_{max}$, $p_{min}$ specify the probability bounds. 
For each token representation, we apply the dropout operation $\mathcal{D}(\cdot)$ with its corresponding probability:
\begin{equation}
\hspace{-5pt}
   \mathbf{x}_{out,j}^{(i)} = 
   \begin{cases} 
       \mathcal{D}(\mathbf{x}_{j}^{(i)}) & \text{if } j \in \mathcal{J}_{target}^{(i)} \text{ and } \text{rand}() < p_j \\
       \mathbf{x}_{j}^{(i)} & \text{otherwise} 
   \end{cases}.
\end{equation}

Unlike vanilla dropout that applies uniform randomness globally, our method restricts randomness to importance-determined probability ranges while preserving stochastic sampling within these ranges. This strategy preserves critical tokens for cross-modal alignment while providing effective regularization. To formally characterize IWTD's regularization effect, we analyze its generalization bounds through Rademacher complexity~\cite{zhai2018adaptive,arora2021dropout} (see \textbf{Proposition~\ref{prop:iwtd_rademacher_comparison}} and \textit{proof} in Appendix A.1).

\paragraph{Residual Entropy Regularization.}
Prompt learning methods typically employ consistency regularization to align learnable prompts with pre-trained representations for general knowledge transfer~\cite{yao2023visual,khattak2023self}. Conventional $\mathcal{L}_1$ or $\mathcal{L}_2$ consistency regularization, while mitigating semantic drift between variable and fixed branch, is overly restrictive against beneficial dropout-induced variations. Therefore, we propose residual entropy regularization to permit such advantageous diversity.
Specifically, we denote the output embedding features from the VLM branch processed by IWTD as $\mathbf{z}^d$, and the original VLM output embedding features as $\mathbf{z}^o$. We define the residual component $\mathbf{z}^r$ as the variation introduced by IWTD. To isolate this residual component, an intuitive method is to subtract $\mathbf{z}^d$ and $\mathbf{z}^o$. For more flexible control over the residual component, we consider the following linear relationship:
\begin{equation} \label{eq:residual_def}
\mathbf{z}^d = \lambda \mathbf{z}^o + (1 - \lambda) \mathbf{z}^r,
\end{equation}
where $\mathbf{z}^r = \frac{\mathbf{z}^d - \lambda \mathbf{z}^o}{1 - \lambda}$ is the residual component and $\lambda \in (0, 1)$. 
We employ an annealing strategy~\cite{jing2023order}:
\scalebox{0.9}{$\lambda = \lambda_0 [ 1 - \left(1 + 10t/T\right)^{3/4}]$},
where $t$, $T$ are current and total iterations. This gradually increases $\lambda$ to balance original and residual components.
Based on Occam's razor principle, linearity is a good inductive bias~\cite{zhang2018mixup,jing2023order}, and Eq.~\eqref{eq:residual_def} is an invertible operation that can easily infer $\mathbf{z}^r$ given $\mathbf{z}^d$ and $\mathbf{z}^o$ (linear vs. non-linear analysis in Appendix C). Besides, assuming IWTD primarily targets on non-critical information, the residual component $\mathbf{z}^r$ should ideally contain minimal class-discriminative features. To encourage this, we aim to maximize the uncertainty associated with the class prediction based on $\mathbf{z}^r$.

Taking the visual modality as an example, we compute the cosine similarity $\text{sim}(\cdot)$, between the visual residual component $\mathbf{z}_v^r$ and the original textual class embeddings $\mathcal{W}_t^o = \{\mathbf{z}_{t,k}^o\}_{k=1}^K$. $\mathbf{z}_{t,k}^o$ are obtained by feeding text descriptions corresponding to each class $k$ into the VLM's text encoder $\mathcal{G}$. The probability distribution over classes given the visual residual is:
\begin{equation}
    p(y=k | \mathbf{z}_v^r) = \frac{\exp(\text{sim}(\mathbf{z}_v^r, \mathbf{z}_{t,k}^o) /\tau) }{\sum_{j=1}^K \exp(\text{sim}(\mathbf{z}_v^r, \mathbf{z}_{t,j}^o) /\tau)}.
\end{equation}

Next, we maximize the conditional entropy $\mathcal{H}(p(y | \mathbf{z}_v^r))$ to enlarge the uncertainty of $\mathbf{z}_v^r$'s prediction. Therefore, our objective for the visual modality is as follows:
\begin{equation} \label{entropy_re}
\hspace{-4pt}
    \mathcal{L}_{RE}^v{=}-\mathcal{H}(p(y|\mathbf{z}_v^r)){=}\sum_{k=1}^K p(y{=}k|\mathbf{z}_v^r) \log p(y{=}k|\mathbf{z}_v^r).
\end{equation}

By minimizing $\mathcal{L}_{RE}^v$, we regularize $\mathbf{z}_v^r$ to have an approximately equal probability of being associated with any category, thereby ensuring $\mathbf{z}_v^r$ does not contain class-discriminative information.

An analogous procedure is applied to the textual modality, yielding a residual entropy loss $\mathcal{L}_{RE}^t$. The total residual entropy regularization: $\mathcal{L}_{RE} = \mathcal{L}_{RE}^v + \mathcal{L}_{RE}^t$. This constrains IWTD by ensuring dropout-altered information does not carry significant class-specific signals, maintaining semantic alignment for knowledge transfer while allowing beneficial robustness variations.
During inference, class probabilities are computed using the learned prompts without dropout:
\begin{equation}
\hspace{-4pt}
p(y{=}k|x){=}\frac{\exp(\text{sim}(\mathcal{F}(V_{input}\!),\!\mathcal{G}(T_{input,k}))/\tau)}{\!\!\sum_{j{=}1}^K\!\exp(\text{sim}(\mathcal{F}(V_{input}\!),\!\mathcal{G}(T_{input,j}))\!/\!\tau)}.
\end{equation}

The learned representations from our dropout-based training directly contribute to robust inference.
\section{Experiments}\label{experiments}
In this section, we conduct extensive experiments on widely-used benchmarks to evaluate our proposed method. 
We assess its performance in base-to-novel generalization, cross-dataset evaluation (Appendix B.1), few-shot classification, and out-of-distribution generalization, comparing it against competitive vision-language prompt learning baselines.

\begin{table*}[!t]
  \centering
  \caption{\textbf{Base-to-novel generalization.} Comparison with CoOp and the methods mainly focusing on regularization techniques to improve generalization across 11 image recognition datasets. Bold values indicate the best results. HM: Harmonic Mean.}
  \vspace{-2pt}
  \setlength{\tabcolsep}{7.5pt}
  \renewcommand{\arraystretch}{0.88}
  \scalebox{0.92}{
    \begin{tabular}{lccc|ccc|ccc|ccc}
    \toprule
    \multirow{2}[4]{*}{Method} & \multicolumn{3}{c}{Average} & \multicolumn{3}{c}{ImageNet} & \multicolumn{3}{c}{Caltech101} & \multicolumn{3}{c}{OxfordPets} \\
\cmidrule{2-13}          & Base  & Novel & HM    & Base  & Novel & HM    & Base  & Novel & HM    & Base  & Novel & HM \\
    \midrule
    CoOp$_{\text{(IJCV'22)}}$ & 82.69  & 63.22  & 71.66  & 76.47  & 67.88  & 71.92  & 96.00  & 89.81  & 93.73  & 93.67  & 95.29  & 94.47  \\
    KgCoOp$_{\text{(CVPR'23)}}$ & 80.73  & 73.60  & 77.00  & 75.83  & 69.96  & 72.78  & 97.72  & 94.39  & 96.03  & 94.65  & 97.76  & 96.18  \\
    PSRC$_{\text{(ICCV'23)}}$ & 84.26  & 76.10  & 79.97  & 77.60  & 70.73  & 74.01  & 98.10  & 94.03  & 96.02  & 95.33  & 97.30  & 96.30  \\
    % ProMetaR$_{\text{(CVPR'24)}}$ & 84.39  & 76.93  & 80.49  & 77.76  & 70.75  & 74.09  & 98.11  & 94.29  & 96.16  & 95.57  & 97.43  & 96.49  \\
    % HPT$_{\text{(AAAI'24)}}$ & 84.32  & 76.86  & 80.23  & 77.95  & 70.74  & 74.17  & 98.37  & 94.98  & 96.65  & 95.78  & 97.65  & 96.71  \\
    DeKgTCP$_{\text{(ICLR'25)}}$ & 84.96  & 76.38  & 80.44  & 77.40  & 69.20  & 73.07  & 98.64  & 95.20  & 96.89  & 94.47  & 97.76  & 96.09  \\
    TAP$_{\text{(ICLR'25)}}$ & 84.75  & 77.63  & 81.04  & 77.97  & 70.40  & 73.99  & \textbf{98.90}  & 95.50  & 97.17  & 95.80  & 97.73  & 96.76  \\
    TAC$_{\text{(CVPR'25)}}$ & 85.24  & 77.60  & 81.24  & \textbf{78.57}  & 71.03  & 74.61  & 98.57  & 95.27  & 96.89  & 95.93  & \textbf{98.17}  & 97.04  \\
    \midrule
    \rowcolor[HTML]{e9f0fb}
    DroPLe$_{\text{(Ours)}}$ & \textbf{86.12}  & \textbf{78.44}  & \textbf{82.10}  & 78.24  & \textbf{71.38}  & \textbf{74.65}  & 98.72  & \textbf{96.06}  & \textbf{97.37}  & \textbf{96.38}  & 98.13  & \textbf{97.25}  \\
    \midrule
    % \midrule
    \multirow{2}[4]{*}{Method} & \multicolumn{3}{c}{StanfordCars} & \multicolumn{3}{c}{Flowers102} & \multicolumn{3}{c}{Food101} & \multicolumn{3}{c}{FGVCAircraft} \\
\cmidrule{2-13}          & Base  & Novel & HM    & Base  & Novel & HM    & Base  & Novel & HM    & Base  & Novel & HM \\
    \midrule
    CoOp$_{\text{(IJCV'22)}}$ & 78.12  & 60.40  & 68.13  & 97.60  & 59.67  & 74.06  & 88.33  & 82.26  & 85.19  & 40.44  & 22.30  & 28.75  \\
    KgCoOp$_{\text{(CVPR'23)}}$ & 71.76  & 75.04  & 73.36  & 95.00  & 74.73  & 83.65  & 90.50  & 91.70  & 91.09  & 36.21  & 33.55  & 34.83  \\
    PSRC$_{\text{(ICCV'23)}}$ & 78.27  & 74.97  & 76.58  & 98.07  & 76.50  & 85.95  & 90.67  & 91.53  & 91.10  & 42.73  & 37.87  & 40.15  \\
    % ProMetaR$_{\text{(CVPR'24)}}$ & 78.32  & \textbf{75.18}  & 76.72  & 98.13  & 77.66  & 86.70  & 90.80  & 91.89  & 91.34  & 42.02  & 38.63  & 40.25  \\
    % HPT$_{\text{(AAAI'24)}}$ & 76.95  & 74.23  & 75.57  & 98.17  & \textbf{78.37}  & 87.16  & 90.46  & 91.57  & 91.01  & 42.68  & 38.13  & 40.28  \\
    DeKgTCP$_{\text{(ICLR'25)}}$ & 81.18  & 74.75  & 77.83  & 98.58  & 75.18  & 85.30  & 90.73  & 91.55  & 91.14  & 45.20  & 35.09  & 39.51  \\
    TAP$_{\text{(ICLR'25)}}$ & 80.70  & 74.27  & 77.35  & 97.90  & 75.57  & 85.30  & 90.97  & 91.83  & 91.40  & 44.40  & 36.50  & 40.06  \\
    TAC$_{\text{(CVPR'25)}}$ & 81.63  & 74.17  & 77.72  & 97.97  & 76.87  & 86.15  & 90.87  & 91.87  & 91.37  & 44.60  & 37.70  & 40.86  \\
    \midrule
    \rowcolor[HTML]{e9f0fb}
    DroPLe$_{\text{(Ours)}}$ & \textbf{82.87}  & \textbf{75.04}  & \textbf{78.76}  & \textbf{98.61}  & \textbf{78.29}  & \textbf{87.28} & \textbf{91.18}  & \textbf{92.20}  & \textbf{91.69}  & \textbf{49.26}  & \textbf{39.35}  & \textbf{43.75}  \\
    \midrule
    % \midrule
    \multirow{2}[4]{*}{Method} & \multicolumn{3}{c}{SUN397} & \multicolumn{3}{c}{DTD} & \multicolumn{3}{c}{EuroSAT} & \multicolumn{3}{c}{UCF101} \\
\cmidrule{2-13}          & Base  & Novel & HM    & Base  & Novel & HM    & Base  & Novel & HM    & Base  & Novel & HM \\
    \midrule
    CoOp$_{\text{(IJCV'22)}}$ & 80.60  & 65.89  & 72.51  & 79.44  & 41.18  & 54.24  & 93.19  & 54.74  & 68.69  & 84.69  & 56.05  & 67.46  \\
    KgCoOp$_{\text{(CVPR'23)}}$ & 80.29  & 76.53  & 78.36  & 77.55  & 54.99  & 64.35  & 85.64  & 64.34  & 73.48  & 82.89  & 76.67  & 79.65  \\
    PSRC$_{\text{(ICCV'23)}}$ & 82.67  & 78.47  & 80.52  & 83.37  & 62.97  & 71.75  & 92.90  & 73.90  & 82.32  & 87.10  & 78.80  & 82.74  \\
    % ProMetaR$_{\text{(CVPR'24)}}$ & 82.70  & 79.02  & 80.82  & 83.02  & 64.05  & 72.31  & \textbf{94.94}  & 77.44  & 85.30  & 86.97  & 79.84  & 83.25  \\
    % HPT$_{\text{(AAAI'24)}}$ & 82.57  & 79.26  & 80.88  & 83.84  & 63.33  & 72.16  & 94.24  & 77.12  & 84.82  & 86.52  & 80.06  & 83.16  \\
    DeKgTCP$_{\text{(ICLR'25)}}$ & 82.52  & 78.30  & 80.35  & 83.80  & 59.66  & 69.70  & 94.02  & 81.69  & 87.42  & 88.06  & 81.77  & 84.80  \\
    TAP$_{\text{(ICLR'25)}}$ & 82.87  & 79.53  & 81.17  & 84.20  & \textbf{68.00}  & 75.24  & 90.70  & 82.17  & 86.22  & 87.90  & 82.43  & 85.08  \\
    TAC$_{\text{(CVPR'25)}}$ & 83.70  & 80.03  & 81.82  & 83.37  & 64.27  & 72.58  & 94.37  & 82.60  & 88.10  & 88.07  & 81.67  & 84.75  \\
    \midrule
    \rowcolor[HTML]{e9f0fb}
    DroPLe$_{\text{(Ours)}}$ & \textbf{83.82}  & \textbf{80.07}  & \textbf{81.90}  & \textbf{85.43}  & 67.32  & \textbf{75.30}  & \textbf{94.73}  & \textbf{82.70}  & \textbf{88.31}  & \textbf{88.16}  & \textbf{82.73}  & \textbf{85.39}  \\
    \bottomrule
    \end{tabular}%
    }
  \label{tab_b2n}%
  \vspace{-5pt}
\end{table*}%
\paragraph{Datasets.}
Following the previous work \cite{zhou2022learning}, our experiments utilize a diverse array of 11 image classification datasets: UCF101 \cite{soomro2012ucf101} (action recognition), DTD \cite{cimpoi2014describing} (texture analysis), SUN397 \cite{xiao2016sun} (scene recognition), EuroSAT \cite{helber2019eurosat} (satellite imagery), five fine-grained datasets (Flowers102 \cite{nilsback2008automated}, FGVCAircraft \cite{maji2013fine}, Food101 \cite{bossard2014food}, OxfordPets \cite{parkhi2012cats}, StanfordCars \cite{krause20133d}), and two generic object datasets (Caltech101 \cite{fei2004learning}, ImageNet \cite{deng2009imagenet}). For evaluating out-of-distribution generalization, ImageNet serves as the source dataset, while its variants (ImageNet-A \cite{hendrycks2021natural}, ImageNet-R \cite{hendrycks2021many}, ImageNet-Sketch \cite{wang2019learning}, ImageNet-V2 \cite{recht2019imagenet}) are target datasets.

\paragraph{Implementation Details.}
Our implementation is based on CLIP-B/16~\cite{radford2021learning}. 
% We use LLaMA3-8B~\cite{grattafiori2024llama} as the LLM, mapping its final decoder layer to CLIP's text branch with 16 tokens. 
The number of bridge tokens $\xi$ is set to 64. The textual and visual prompt learning layers are set to 9 and 6, with corresponding adaptive dropout layers are 6. Learnable prompt length is 4.
The textual token space
is augmented with 16 learnable tokens for expanded dropout operation, with $T^{(i)}_{\text{input}} = \{t_{\text{bos}}, P^{(i)}_t, T_{\text{embed}}, t_{\text{eos}}\}$, where $P^{(i)}_t$ denotes supplementary tokens.
The default dropout probability ranges from 10\% to 50\%.
% Following~\cite{srivastava2014dropout}, we use Gaussian dropout for $\mathcal{D}(\cdot)$. 
$\lambda_0$ is set to 0.1. Experiments are conducted on 2 NVIDIA 4090 GPUs.

\subsection{Base-to-Novel Generalization}\label{b2n}
In the base-to-novel generalization, we evaluate our dropout prompt learning method, dubbed as \textbf{DroPLe}, against recent approaches specifically designed to enhance prompt learning generalization via regularization techniques
(KgCoOp~\cite{yao2023visual}, PSRC~\cite{khattak2023self}, DeKgTCP~\cite{li2025divergence}), the baseline CoOp~\cite{zhou2022learning}, the LLM-enhanced method TAP~\cite{ding2024tree} and TAC~\cite{hao2025task} which incorporates textual-visual consistency regularization. As shown in Table~\ref{tab_b2n}, with 16-shot training on base classes and zero-shot evaluation on novel classes, DroPLe achieves 86.12\% base accuracy and 78.44\% novel accuracy, with 82.10\% HM significantly surpassing TAP (81.04\%) and DeKgTCP (80.44\%). Notably, DroPLe achieves the best overall HM across all 11 datasets and demonstrates superior performance on fine-grained recognition, achieving 43.75\% HM on FGVCAircraft (+2.89\% over TAC). Furthermore, to verify DroPLe's generalization in long-tailed scenarios, we follow Candles~\cite{shi2024efficient}, replacing loss functions in CoOp~\cite{zhou2022learning} and CoCoOp~\cite{zhou2022conditional} with LA Loss~\cite{ren2020balanced}. As shown in Fig.~\ref{fig_imbalance} (a), under an imbalance ratio of 10, DroPLe outperforms LFA~\cite{ouali2023black} and other long-tail classification methods in terms of harmonic mean accuracy. On EuroSAT, DroPLe achieves +4.6\% gain over the 75.6\% baseline, outperforming GLA's +3.8\%~\cite{zhu2023generalized}. Notably, DroPLe is orthogonal to such post-hoc methods, with GLA+DroPLe yielding +5.4\% improvement. This further demonstrates its effectiveness and robustness in handling imbalanced data distributions.
\begin{figure}%[!h]
\centering
\includegraphics[width=1\linewidth]{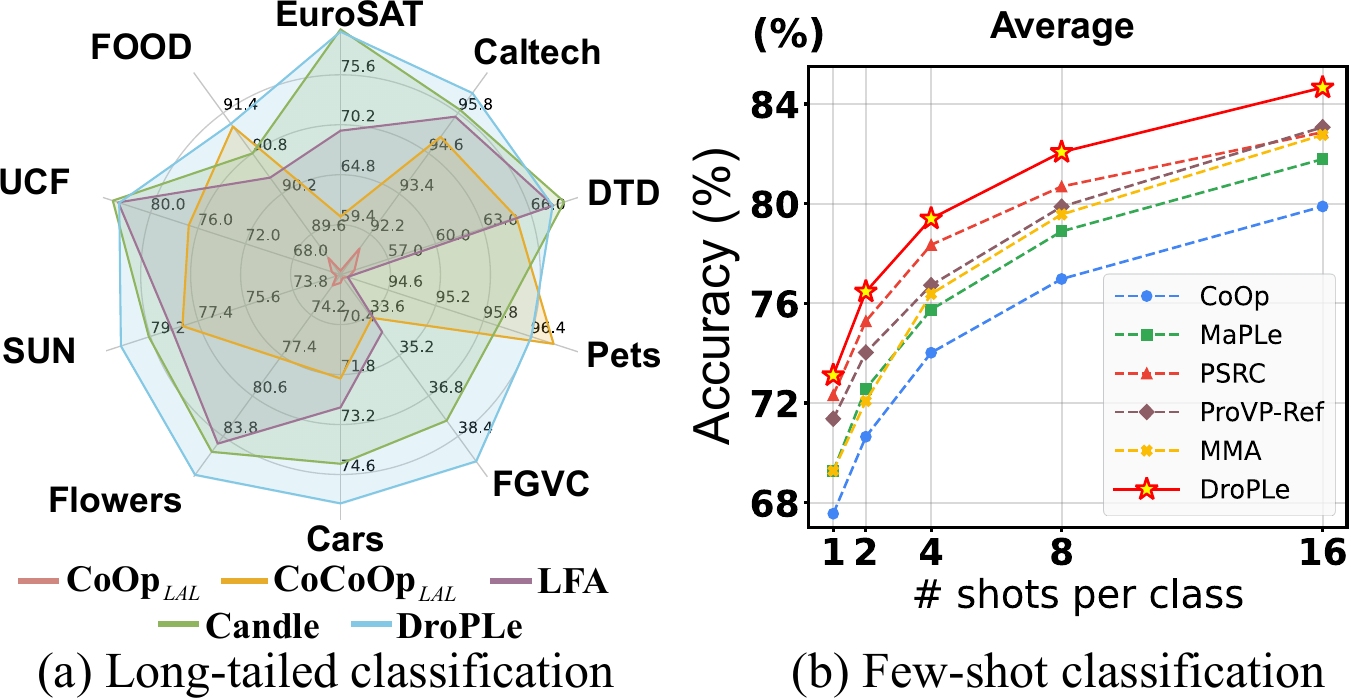} %,height=0.15\textheight
  \vspace{-6pt}
  \caption{(a) Base-to-novel task with \textbf{imbalance ratio 10}. (b) \textbf{Few-shot classification.} We conduct experiments on 11 datasets. Per-dataset results are in the Appendix B.2.}
  \label{fig_imbalance}
  \vspace{-5pt}
\end{figure}

\begin{table}
  \centering
  \vspace{-3pt}
  \caption{\textbf{Out-of-distribution generalization.} `*'~means reproduced results. Best results highlighted in \colorbox[HTML]{d9e6fa}{\textbf{first}}, \colorbox[HTML]{ebf0fa}{second}.}
  \setlength{\tabcolsep}{6.5pt}
  \renewcommand{\arraystretch}{0.90}
  \scalebox{0.84}{
    % \begin{tabular}{l|c|ccccc}
    \begin{tabular}{lc|cccc|>{\columncolor[HTML]{f8f8f8}}c}
    \toprule
    \multirow{2}[4]{*}{\textbf{Method}} & \textbf{Source} & \multicolumn{5}{c}{\textbf{Target}} \\
\cmidrule{2-7}          & \textbf{ImgNet} & \textbf{-V2}  & \textbf{-S}    & \textbf{-A}    & \textbf{-R}    & \textbf{\textit{OOD}} \\
    \midrule
    KgCoOp & 71.20  & 64.10  & 48.97  & 50.69  & 76.70  & 60.11  \\
    PSRC  & 71.27  & 64.35  & 49.55  & 50.90  & 77.80  & 60.65  \\
    CoPrompt & 70.80  & 64.25  & 49.43  & 50.50  & 77.51  & 60.42  \\
    ProMetaR & \cellcolor[HTML]{ebf0fa}71.29  & 64.39  & 49.55  & 51.25  & \cellcolor[HTML]{ebf0fa}77.89  & 60.77  \\
    GalLoP* & 71.14  & 64.32  & \cellcolor[HTML]{ebf0fa}49.56  & 50.83  & 77.42  & 60.53  \\
    SPTR & 70.05  & \cellcolor[HTML]{ebf0fa}64.40  & 48.78  & \cellcolor[HTML]{ebf0fa}51.30  & \cellcolor[HTML]{d9e6fa}\textbf{77.90}  & \cellcolor[HTML]{ebf0fa}60.59  \\
    \midrule
    DroPLe  & \cellcolor[HTML]{d9e6fa}\textbf{71.94}  & \cellcolor[HTML]{d9e6fa}\textbf{64.90}  & \cellcolor[HTML]{d9e6fa}\textbf{50.36}  & \cellcolor[HTML]{d9e6fa}\textbf{51.33}  & 77.59  & \cellcolor[HTML]{d9e6fa}\textbf{61.05}  \\
    \bottomrule
    \end{tabular}%
   }
  \label{tab:ood}%
  \vspace{-5pt}
\end{table}

% \begin{figure}%[htbp]
%     \centering    \includegraphics[width=1.0\linewidth]{figures/few-shot-1.pdf}
%     \caption{\textbf{Few-shot classification.} We conduct comparative experiments on 11 datasets.}
%     \label{fig_few-shot}
% \end{figure}

\subsection{Few-shot Classification}\label{few-shot classification_appendix}
Fig.~\ref{fig_imbalance}(b) presents the comprehensive few-shot classification results averaged across all 11 datasets. DroPLe demonstrates strong few-shot learning capabilities, showing competitive performance across different shot settings. The method achieves favorable results with our importance weighted token dropout providing effective regularization in data-limited scenarios. Detailed per-dataset results are provided in Appendix B.2, where DroPLe achieves notable performance on challenging datasets such as 97.41\% on Caltech101 and 94.57\% on OxfordPets at 16 shots per class. The results validate our adaptive dropout strategy successfully maintains crucial semantic information while introducing beneficial diversity for improved generalization in few-shot learning.

\subsection{Out-of-distribution Generalization}\label{o.o.d.}
We further evaluate DroPLe's robustness on out-of-distribution (OOD) generalization. As shown in Table~\ref{tab:ood}, DroPLe attains the highest average OOD accuracy of 61.05\%, surpassing SPTR~\cite{cui2025similarity} by 0.46\%. Notably, compared with GalLoP~\cite{lafon2024gallop} that applies text-level dropout, our token-level adaptive dropout strategy achieves better performance on ImageNet-S (+0.80\%) and ImageNet-V2 (+0.58\%). This indicates that token-level dropout enables more precise regularization than text-level dropout, leading to better generalization across distribution shifts.

\begin{table}
  \centering
  \caption{\textbf{Ablation studies of components} on base-to-novel task. $\mathcal{D}_{cos}$ is cosine similarity.
  $\mathcal{D}_{KL}$ is KL divergence. DroPLe$^{\blacktriangle}$ only uses the score $S_{self}$, DroPLe$^{\blacklozenge}$ adds $S_{cls}$ on $^{\blacktriangle}$, and DroPLe$^{\bigstar}$ further adds $S_{cross}$ on $^{\bigstar}$.
  } 
  % \vspace{-2pt}
  % \setlength{\tabcolsep}{20pt}
  \renewcommand{\arraystretch}{0.95}
  \newcolumntype{C}[1]{>{\centering\arraybackslash}p{#1}}
  \scalebox{0.76}{
    \begin{tabular}{C{0.7cm}C{0.7cm}|p{1.5cm}p{2.7cm}|C{0.7cm}C{0.7cm}C{0.7cm}}
    \toprule
    IWTD  & $\mathcal{L}_{RE}$ & Component & \multicolumn{1}{l|}{Method} & Base  & Novel & HM \\
    \midrule
    \multirow{4}[2]{*}{\ding{55}} & \multirow{4}[2]{*}{\ding{55}} &       & Baseline & 83.82 & 75.50  & 79.44  \\ 
           &       & \multirow{3}[1]{*}{\textit{Dropout}} & \ +Dropout$_{0.5}$ & 82.71 & 73.45 & 77.81  \\
          
          &      &       & \ +Dropout$_{0.3}$ & 83.44 & 74.53 & 78.73  \\
          &     &       & \ +Dropblock & 82.63 & 73.75 & 77.94  \\
          \cmidrule{1-7}
        $\checkmark$  &   \ding{55}     &       & \ +IWTD (Ours) & 85.34 & 77.38 & 81.17  \\
\cmidrule{1-7}    \multirow{2}[0]{*}{$\checkmark$} & \multirow{2}[0]{*}{\ding{55}} & \multirow{2}[0]{*}{\parbox{1.5cm}{\textit{Consistency Reg.}}} & $\mathcal{D}_{cos}$  & 85.53 & 77.60 & 81.37  \\
          &       &       & $\mathcal{D}_{KL}$+$\mathcal{L}_{1}$ & 85.80 & 77.53 & 81.46  \\
          \cmidrule{1-7} 
        $\checkmark$  &    $\checkmark$   &       & $\mathcal{L}_{RE}$ (Ours) & 86.12 & 78.44 & 82.10  \\
\cmidrule{1-7}    \multirow{3}[2]{*}{$\checkmark$} & \multirow{3}[2]{*}{$\checkmark$} & \multirow{3}[2]{*}{\parbox{1.5cm}{\textit{Importance\\Scores}}} & \cellcolor[HTML]{f3f5fa}DroPLe$^{\blacktriangle}_{S_{self}}$ & \cellcolor[HTML]{ebf0fa}84.78 & \cellcolor[HTML]{ebf0fa}76.92 & \cellcolor[HTML]{ebf0fa}80.66  \\
          &       &       & \cellcolor[HTML]{e6edfa}DroPLe$^{\blacklozenge}_{S_{self,cls}}$ & \cellcolor[HTML]{e6edfa}85.24 & \cellcolor[HTML]{e6edfa}77.41 & \cellcolor[HTML]{e6edfa}81.14  \\
          &       &       & \cellcolor[HTML]{d2e2fa}DroPLe$^{\bigstar}_{S_{self,cls,cross}}$ & \cellcolor[HTML]{d2e2fa}\textbf{86.12} & \cellcolor[HTML]{d2e2fa}\textbf{78.44} & \cellcolor[HTML]{d2e2fa}\textbf{82.10}  \\
    \bottomrule
    \end{tabular}%
  }
  \label{tab:ablation}%
  \vspace{-3pt}
\end{table}

\subsection{Ablation study}\label{ablation_study}
\noindent\textbf{Component Ablation.}
Table~\ref{tab:ablation} shows the ablation analysis on base-to-novel task. The baseline implements a similar architecture to Fig.~\ref{pipeline} but removes IWTD and $\mathcal{L}_{RE}$, using $\mathcal{L}_2$ norm to constrain dual-branch embeddings as in \cite{yao2023visual}. While vanilla dropout (randomly dropping individual tokens, dropout probability is 50\% or 30\%) and dropblock (dropping consecutive 3 tokens) show limited gains, IWTD achieves substantial improvement (81.17\% HM) by preserving semantically meaningful tokens.
Upon IWTD, we examine consistency regularization approaches. Although CoPrompt's $\mathcal{D}_{cos}$ and PSRC's $\mathcal{D}_{KL}$+$\mathcal{L}_{1}$ improve performance, our proposed $\mathcal{L}_{RE}$ further boosts HM to 82.10\% by encouraging representation diversity. Finally, ablating scores in Eq.~(\ref{eq:importance_metric}) shows the cross-modal score $S_{cross}$ contributes the largest gain, validating its importance for VLM alignment.

\noindent\textbf{Configuration Analysis.}
Results of hyperparameter and baseline analysis are shown in Table~\ref{tab:diff_exper_studies}.
Vanilla dropout typically employs a 50\% dropout probability, as higher rates risk eliminating critical features. Our method performs targeted dropout based on token importance, maintaining robustness even with higher probability thresholds, 70\%. Empirically, the 10-50\% range yields optimal results and serves as our default configuration.
The number of shared tokens $\xi$ is set to 64, balancing cross-modal semantic representation with computational cost.
For the initial value $\lambda_0$ of $\lambda$ in Eq.~(\ref{eq:residual_def}), we set $\lambda_0=0.1$ to emphasize IWTD residual features during early training and enhance model robustness.
As demonstrated in Table~\ref{tab:diff_exper_studies}(d), DroPLe consistently improves generalization on challenging OOD tasks across various prompt learning baselines, confirming our method's effectiveness.

\begin{table}[htbp]
  \centering
  \caption{\textbf{Ablation analysis of different settings.} The default configuration is colored \colorbox[HTML]{e9f0fb}{blue}.}
  \label{tab:diff_exper_studies}
  \hspace{-4pt}
  \begin{minipage}{0.24\textwidth}
    % \centering
    \small
    (a) Range of \textbf{dropout probability}.
    \setlength{\tabcolsep}{4.5pt}
    \newcolumntype{C}[1]{>{\centering\arraybackslash}p{#1}}
    \scalebox{0.9}{
    \hspace{-6pt}
      \begin{tabular}{cccc}
      % \toprule
      Range (\%)  & Base  & Novel & HM \\
      \midrule
      0-40  & 86.08  & 78.21  & 81.96  \\
      5-45 & 86.04  & 78.35  & 82.02  \\
      \rowcolor[HTML]{e9f0fb}
      10-50  & 86.12  & 78.44  & 82.10  \\
      20-70 & 85.85  & 78.13  & 81.81  \\
      % \bottomrule
      \end{tabular}%
    }
  \end{minipage}%
  \begin{minipage}{0.24\textwidth}
    \hspace{2pt}
    \centering
    \small
    (b) Number of \textbf{shared tokens} $\xi$.
    \setlength{\tabcolsep}{5.5pt}
    \scalebox{0.9}{
      \begin{tabular}{cccc}
      % \toprule
      $\xi$ & Base  & Novel & HM \\
      \midrule
      16    & 85.74  & 78.39  & 81.90  \\
      32    & 85.97  & 78.36  & 81.99  \\
      \rowcolor[HTML]{e9f0fb}
      64    & 86.12  & 78.44  & 82.10  \\
      128   & 86.20  & 78.41  & 82.12  \\
      % \bottomrule
      \end{tabular}%
    }
  \end{minipage}

  \vspace{0.5em}

  \hspace{-24pt}
  \begin{minipage}{0.21\textwidth}
    \centering
    \small
    (c) Initial \textbf{balance para-\\meter} $\lambda_0$.
    \setlength{\tabcolsep}{3pt}
    \scalebox{0.9}{
      \begin{tabular}{cccc}
      % \toprule
      $\lambda_0$ & Base  & Novel & HM \\
      \midrule
      0.05  & 86.02  & 78.48  & 82.08  \\
      \rowcolor[HTML]{e9f0fb}
      0.10  & 86.12  & 78.44  & 82.10  \\
      0.20  & 85.97  & 78.28  & 81.94  \\
      0.40  & 85.93  & 78.26  & 81.92  \\
      % \bottomrule
      \end{tabular}%
    }
  \end{minipage}%
  \hspace{0.5pt}
  \begin{minipage}{0.25\textwidth}
    \centering
    \small
    \hspace{20pt}
    (d) OOD~generalization~of \textbf{other approaches}.
    \setlength{\tabcolsep}{2.5pt}
    \scalebox{0.78}{
      \begin{tabular}{l|ccccc}
      % \toprule
      Method & ImgNet & -V2   & -S    & -A    & -R \\
      \midrule
      CoOp  & 71.51  & 64.20  & 47.99  & 49.71  & 75.21  \\
      \rowcolor[HTML]{e9f0fb}
      \ \ +DroPLe & 71.32  & 64.23  & 49.06  & 50.93  & 77.04  \\
      \midrule
      MaPLe & 70.72  & 64.07  & 49.15  & 50.90  & 76.98  \\
      \rowcolor[HTML]{e9f0fb}
      \ \ +DroPLe & 71.26  & 64.38  & 49.56  & 51.28  & 77.84  \\
      % \bottomrule
      \end{tabular}%
    }
  \end{minipage}
\end{table}

\noindent\textbf{Cross-Architecture Evaluation.}
We evaluate our method across diverse VLM architectures and adapter-based methods, demonstrating broad effectiveness (see Appendix B.3).

\noindent\textbf{Visualization.}
Grad-CAM visualizations reveal that our importance weighted token dropout produces concentrated attention on semantically relevant regions, while vanilla dropout exhibits scattered patterns (Fig. 5(a)), showing our method effectively maintains cross-modal semantic alignment.

\begin{figure}[t]
    \centering    \includegraphics[width=1.0\linewidth]{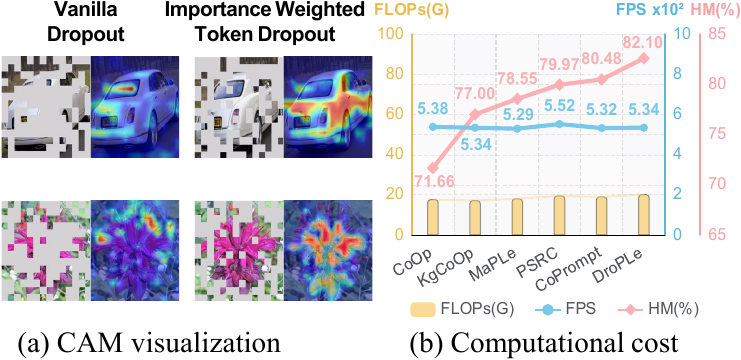}
    \vspace{-12pt}
    \caption{(a) \textbf{Grad-CAM visualizations for different dropout methods.} Redder colors indicate higher feature attention. (b) \textbf{Computational cost and performance} of different prompt learning methods.}
    \vspace{-12pt}
    \label{cam_dropout}
\end{figure}

\noindent\textbf{Computational Efficiency.}
Our method maintains comparable FLOPs and inference speed while achieving superior performance (HM: 82.10\%) over CoPrompt (80.48\%) and PSRC (79.97\%), demonstrating efficient enhancement without computational overhead, as shown in Fig.~\ref{cam_dropout}(b).
\section{Conclusion}\label{conclusion}
This paper introduces Dropout Prompt Learning, a novel paradigm that enhances vision-language model generalization through token-level dropout. We propose importance weighted token dropout, which dynamically assesses token significance by considering intra-modal context, inter-modal alignment, and task-specific relevance. Additionally, we introduce residual entropy regularization to maintain general knowledge and promote representational diversity. Extensive experiments across diverse benchmarks demonstrate our method's effectiveness in low-shot learning, long-tail classification, and out-of-distribution generalization, showing promise for robust VLM adaptation.

\bibliography{aaai2026}
\clearpage

\clearpage
\appendix
\setcounter{page}{1}
\renewcommand{\thesection}{\Alph{section}}
\setcounter{section}{0}
\renewcommand{\thetable}{a\arabic{table}}
\setcounter{table}{0}
\renewcommand{\thefigure}{a\arabic{figure}}
\setcounter{figure}{0}
\renewcommand{\theequation}{a.\arabic{equation}}
\setcounter{equation}{0}

\twocolumn[
\begin{@twocolumnfalse}
\centerline{{\LARGE \textbf{Appendix for ``Dropout Prompt Learning:}}}
\centerline{{\LARGE \textbf{Towards Robust and Adaptive Vision-Language Models''}}}
\vspace{1em}
\end{@twocolumnfalse}
]

\setcounter{proposition}{0}
\section{A. Theoretical Analysis} % 
% \label{sec:proof_prop_iwtd_rademacher_comparison}
\subsection{A.1. Generalization Bounds for IWTD}
\begin{proposition}
\label{prop:iwtd_rademacher_comparison}
Given a training sample $\mathcal{X}$ of $n$ instances, let $\mathbb{F}_{\mathbf{q}}$ be the hypothesis space induced by layer-wise token retention probabilities $\mathbf{q}$, and $h \in \mathbb{F}_{\mathbf{q}}$ be a learned hypothesis. The expected risk $R(h)$ is bounded, with probability at least $1-\delta$, by:
\begin{align}
R(h) \leq \hat{R}_{\mathcal{X}}(h) + 2 C_{\ell} \hat{\mathcal{R}}_{\mathcal{X}}(\mathbb{F}_{\mathbf{q}}) + B_{loss} \sqrt{\frac{\ln(1/\delta)}{2n}},
\label{eq:std_gen_bound}
\end{align}
where $\hat{R}_{\mathcal{X}}(h)$ denotes the empirical risk, $\ell$ represents the $C_{\ell}$-Lipschitz loss function bounded by $B_{loss}$, $\hat{\mathcal{R}}_{\mathcal{X}}(\mathbb{F}_{\mathbf{q}})$ measures the empirical Rademacher complexity, and $\delta \in (0,1)$ specifies the confidence level. The deviation of expected risk $R(h)$ from empirical risk $\hat{R}_{\mathcal{X}}(h)$ is predominantly determined by $\hat{\mathcal{R}}_{\mathcal{X}}(\mathbb{F}_{\mathbf{q}})$, where a smaller complexity yields a more precise bound.

Let $\mathbb{F}_{\text{IWTD}}$ and $\mathbb{F}_{\text{VTD}}$ denote the hypothesis spaces induced by IWTD with adaptive retention probabilities $\mathbf{q}_{\text{IWTD}}$ and vanilla token dropout (VTD) with uniform retention probability $q_v$, respectively. When VTD achieves comparable empirical performance with IWTD, we have:
\begin{align}
\hat{\mathcal{R}}_{\mathcal{X}}(\mathbb{F}_{\text{IWTD}}) \leq \hat{\mathcal{R}}_{\mathcal{X}}(\mathbb{F}_{\text{VTD}}). 
\end{align}
\end{proposition}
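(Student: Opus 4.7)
The plan splits the proposition into two pieces. For the generalization bound in Eq.~\eqref{eq:std_gen_bound}, I would apply a standard Rademacher-complexity argument. First, invoke McDiarmid's inequality on the uniform deviation $\sup_{h \in \mathbb{F}_{\mathbf{q}}} |R(h) - \hat{R}_{\mathcal{X}}(h)|$, exploiting the $B_{loss}$-bound on $\ell$ to produce the $B_{loss}\sqrt{\ln(1/\delta)/(2n)}$ tail. Next, symmetrize the expected supremum using a ghost sample and Rademacher variables, yielding a $2\,\mathbb{E}[\hat{\mathcal{R}}_{\mathcal{X}}(\ell \circ \mathbb{F}_{\mathbf{q}})]$ term. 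Finally, apply Talagrand's contraction lemma with the $C_\ell$-Lipschitz property of $\ell$ to peel off the loss, giving the factor $2 C_\ell \hat{\mathcal{R}}_{\mathcal{X}}(\mathbb{F}_{\mathbf{q}})$. This step is routine.

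For the complexity comparison, I would follow the framework of Zhai et al., who express an upper bound on the Rademacher complexity of a dropout-regularized hypothesis space as a functional $\Phi(\mathbf{q}; \mathcal{X})$ of the retention probabilities and the data. The key structural property is that $\Phi$ responds asymmetrically to shifts of retention mass: increasing $q_j$ on a low-importance token (which contributes mostly noise/spurious directions) enlarges $\Phi$, whereas increasing $q_j$ on a high-importance token preserves signal without inflating the effective capacity. The argument then proceeds in three steps. First, characterize $\mathbf{q}_{\text{IWTD}}$ as the minimizer of $\Phi(\mathbf{q}; \mathcal{X})$ over the feasible set defined by the importance metric $I(\mathbf{x}_j)$ and the probability bounds $[p_{\min},p_{\max}]$ of Eq.~\eqref{eq:probability_p_j}. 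Second, use the hypothesis of comparable empirical performance to place $\mathbf{q}_{\text{IWTD}}$ and $q_v\mathbf{1}$ on a common empirical-risk level set, which pins down a matched expected retention mass $\sum_j q_{\text{IWTD},j}=n q_v$. Third, observe that under this matched-mass condition $q_v\mathbf{1}$ is a feasible point of the optimization defining $\mathbf{q}_{\text{IWTD}}$, so optimality yields $\Phi(\mathbf{q}_{\text{IWTD}}; \mathcal{X}) \leq \Phi(q_v \mathbf{1}; \mathcal{X})$. A direct rearrangement-inequality argument, using that IWTD shifts mass onto tokens with larger $\|\mathbf{x}_{\cdot,j}\|^2$, converts this functional inequality into the desired bound $\hat{\mathcal{R}}_{\mathcal{X}}(\mathbb{F}_{\text{IWTD}}) \leq \hat{\mathcal{R}}_{\mathcal{X}}(\mathbb{F}_{\text{VTD}})$.

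The main obstacle is formalizing ``comparable empirical performance'' tightly enough that $q_v\mathbf{1}$ is a genuine feasible point of the IWTD optimization; without such a matched-risk condition the claim is vacuous, since one could trivially enlarge $\mathbb{F}_{\text{IWTD}}$ by pushing all retention probabilities to one. A secondary difficulty is justifying that $\Phi(\mathbf{q};\mathcal{X})$ remains a valid upper bound on $\hat{\mathcal{R}}_{\mathcal{X}}(\mathbb{F}_{\mathbf{q}})$ in the multimodal setting, where tokens couple across modalities through the bridge attention in Eq.~\eqref{eq:attn_cross_unified}; this requires extending the single-modality analysis of Zhai et al. by controlling the cross-modal coupling, for which a block-diagonal decomposition of the appropriate Gram matrix, combined with Jensen's inequality on the softmax attention weights, should suffice. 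Once these pieces are in place, the monotonicity-plus-optimality step yielding the final inequality is immediate.
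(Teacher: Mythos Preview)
Your treatment of the first part (Eq.~\eqref{eq:std_gen_bound}) is fine and matches the paper, which simply cites it as a standard Rademacher bound.

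For the complexity comparison, your route diverges substantially from the paper's, and the divergence exposes a genuine gap. The paper does \emph{not} work under a matched-mass condition $\sum_j q_{\text{IWTD},j} = N_i q_v$. Instead, it interprets ``comparable empirical performance'' in the opposite direction: because VTD drops tokens uniformly, to avoid losing important tokens it must set $q_v$ high, in fact $q_v \approx q_{high} = 1 - p_{min}$. The paper then uses the simple $L_1$-norm surrogate $\Phi_i(\mathbf{q}^{(i)}) = \|\mathbf{q}^{(i)}\|_1$ from the layerwise dropout bounds of Gao et al.\ and Zhai et al., and observes
\[
\Phi_i(\mathbf{q}_{\text{IWTD}}^{(i)}) \le N_{imp}\,q_{high} + N_{unimp}\,q_{low} < N_i\,q_{high} \approx N_i\,q_v = \Phi_i(\mathbf{q}_{\text{VTD}}^{(i)}),
\]
which is the whole argument. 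So the mechanism is \emph{unequal} total retention mass, not optimal allocation of equal mass.

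Your plan breaks in two places. First, under your matched-mass reading, the paper's $\Phi_i = \|\mathbf{q}^{(i)}\|_1$ is symmetric in tokens, so matched mass gives $\Phi_i(\text{IWTD}) = \Phi_i(\text{VTD})$ exactly, and no rearrangement inequality can produce a strict or even non-trivial ordering; you would need a complexity surrogate that is genuinely asymmetric across tokens, which you assert but do not derive. Second, and more fundamentally, you characterize $\mathbf{q}_{\text{IWTD}}$ as the \emph{minimizer} of $\Phi(\cdot;\mathcal{X})$ over a feasible set, but IWTD is not defined that way: Eq.~\eqref{eq:probability_p_j} fixes each $q_j$ as an affine function of the normalized importance score $\hat I(\mathbf{x}_j^{(i)})$, with no optimization over $\mathbf{q}$ at all. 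So even if $q_v\mathbf{1}$ lies in the feasible set, there is no optimality principle to invoke, and your third step (``optimality yields $\Phi(\mathbf{q}_{\text{IWTD}}) \le \Phi(q_v\mathbf{1})$'') has no foundation. The cross-modal coupling concern you raise is real but the paper simply does not address it; its argument stays at the level of $L_1$ norms of per-layer retention vectors and never touches the bridge attention.
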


\begin{proof}
The generalization bound in Eq.~\eqref{eq:std_gen_bound} is a standard result relying on Rademacher complexity \cite{bartlett2002rademacher, mohri2018foundations}. Our focus is the comparison between $\hat{\mathcal{R}}_{\mathcal{X}}(\mathbb{F}_{\text{IWTD}})$ and $\hat{\mathcal{R}}_{\mathcal{X}}(\mathbb{F}_{\text{VTD}})$.

Theoretical analyses show that the Rademacher complexity of networks with dropout depends on factors including network weights and the dropout or retention probabilities, often exhibiting a multiplicative structure across layers \cite{gao2016dropout, zhai2018adaptive}. This can be conceptually represented as:
\begin{align}
\hat{\mathcal{R}}_{\mathcal{X}}(\mathbb{F}_{\mathbf{q}}) \leq \text{BaseTerm}(\mathcal{X}, \mathbf{W}) \cdot \prod_{i=1}^{K_L} \Psi(\mathbf{W}^{(i)}, \mathbf{q}^{(i)}), \label{eq:rad_prod_bound_proof_final_appendix}
\end{align}
where $K_L$ is the number of layers, $\text{BaseTerm}(\mathcal{X}, \mathbf{W})$ captures dependencies on the sample and overall weights, and $\Psi(\mathbf{W}^{(i)}, \mathbf{q}^{(i)})$ reflects the contribution of layer $i$, depending on its weights $\mathbf{W}^{(i)}$ and retention probability vector $\mathbf{q}^{(i)}$. Bounds like those in \cite{zhai2018adaptive} indicate a dependency on terms related to the $L_1$ norm of $\mathbf{q}^{(i)}$. Therefore, to compare the effect of the dropout strategies while assuming the weight-dependent factors are comparable, we analyze the factor $\Phi_i(\mathbf{q}^{(i)}) = ||\mathbf{q}^{(i)}||_1 = \sum_{j=1}^{N_i} q_j^{(i)}$, where $q_j^{(i)}$ is the retention probability for token $j$ at layer $i$ (with $N_i$ tokens).

\textbf{VTD:} For Vanilla Token Dropout, $q_j^{(i)} = q_v$ for all $j$. The corresponding factor is:
\begin{align}
\Phi_i(\mathbf{q}_{\text{VTD}}^{(i)}) = \sum_{j=1}^{N_i} q_v = N_i q_v. \label{eq:vtd_phi_proof_l1_appendix}
\end{align}

\textbf{IWTD:} For Importance Weighted Token Dropout, $q_j^{(i)} = (1-p_{max}) + \hat{I}(\mathbf{x}^{(i)}_j)(p_{max} - p_{min})$, where $\hat{I}(\mathbf{x}^{(i)}_j) \in [0,1]$ is the normalized importance. Probabilities $q_j^{(i)}$ fall within $[q_{low}, q_{high}]$, where $q_{low}=1-p_{max}$ and $q_{high}=1-p_{min}$. Let $N_{imp}$ be the number of important tokens and $N_{unimp}$ be the number of unimportant tokens, with $N_i = N_{imp} + N_{unimp}$. The factor is:
\begin{equation}
\begin{aligned}
\Phi_i(\mathbf{q}_{\text{IWTD}}^{(i)}) &= \sum_{j \text{ s.t. } q_j^{(i)} \text{ is high}} q_j^{(i)} + \sum_{j \text{ s.t. } q_j^{(i)} \text{ is low}} q_j^{(i)}
\\
&\leq N_{imp} q_{high} + N_{unimp} q_{low}, 
\end{aligned}
\label{eq:iwtd_phi_proof_l1_appendix}
\end{equation}
where the inequality arises because individual $q_j^{(i)}$ are bounded by $q_{high}$ and $q_{low}$ respectively for tokens deemed important or unimportant by the IWTD mechanism.

\textbf{Comparison:} We compare $\Phi_i(\mathbf{q}_{\text{IWTD}}^{(i)})$ and $\Phi_i(\mathbf{q}_{\text{VTD}}^{(i)})$ assuming VTD uses a retention probability $q_v$ that balances regularization and performance preservation. For fair comparison, we consider the case where $q_v$ is set to maintain reasonable performance, which typically requires $q_v$ to be relatively high to avoid dropping too many informative tokens uniformly.
Under this assumption, we have $\Phi_i(\mathbf{q}_{\text{VTD}}^{(i)}) = N_i q_v$. Note that we assume $N_{unimp}>0$ and $q_{low} < q_{high}$, which are reasonable given that there exist unimportant dimensions and the voting threshold is higher than the low-quality threshold. Comparing this with Eq.~\eqref{eq:iwtd_phi_proof_l1_appendix}:
\begin{equation}
\begin{split}
\Phi_i(\mathbf{q}_{\text{IWTD}}^{(i)}) &\leq N_{imp} q_{high} + N_{unimp} q_{low} \\
&< N_{imp} q_{high} + N_{unimp} q_{high} \\
&= (N_{imp} + N_{unimp}) q_{high} \\
&= N_i q_{high} \\
&\approx N_i q_v = \Phi_i(\mathbf{q}_{\text{VTD}}^{(i)}).
\end{split}
\label{eq:your_label}
\end{equation}
Thus, under the stated premise and assuming $N_{unimp}>0$, $\Phi_i(\mathbf{q}_{\text{IWTD}}^{(i)}) < \Phi_i(\mathbf{q}_{\text{VTD}}^{(i)})$. If $N_{unimp}=0$ or if approximations are equalities, $\Phi_i(\mathbf{q}_{\text{IWTD}}^{(i)}) \leq \Phi_i(\mathbf{q}_{\text{VTD}}^{(i)})$. This suggests that IWTD achieves a sum of retention probabilities less than or equal to that of VTD while maintaining comparable performance on critical tokens. This aligns with the principle that adaptive regularization techniques can potentially lead to lower model complexity compared to non-adaptive counterparts for a similar level of empirical performance \cite{zhai2018adaptive}.

\begin{table*}[!t]%[htbp]
  \centering
  \caption{\textbf{Cross-dataset evaluation.} Comparison with CoOp and regularized prompt learning methods.  Trained on ImageNet (16 shots), evaluated on 10 datasets. Best results highlighted in \colorbox[HTML]{d9e6fa}{\textbf{first}}, \colorbox[HTML]{ebf0fa}{second}.}
  \vspace{-5pt}
  \renewcommand{\arraystretch}{0.95}
  \scalebox{1}{
    % \begin{tabular}{l|c|ccccccccccc}
    \begin{tabular}{lc|cccccccccc>{\columncolor[HTML]{f8f8f8}}c}
    \toprule
    \multirow{2}[4]{*}{} & \textbf{Source} & \multicolumn{11}{c}{\textbf{Target}} \\
\cmidrule{2-13} & \rotatebox{60}{\textbf{ImgNet}} & \rotatebox{60}{\textbf{Cal101}} & \rotatebox{60}{\textbf{Pets}}   & \rotatebox{60}{\textbf{Cars}}  & \rotatebox{60}{\textbf{Flowers}} & \rotatebox{60}{\textbf{Food}}  & \rotatebox{60}{\textbf{FGVC}}  & \rotatebox{60}{\textbf{SUN}}  & \rotatebox{60}{\textbf{DTD}}  & \rotatebox{60}{\textbf{SAT}}  & \rotatebox{60}{\textbf{UCF}}  & \rotatebox{60}{\textbf{\textit{Avg.}}} \\
    \midrule
    \midrule
    CoOp$_{\text{(IJCV'22)}}$ & 71.51  & 93.70  & 89.14  & 64.51  & 68.71  & 85.30  & 18.47  & 64.15  & 41.92  & 46.39  & 66.55  & 63.88  \\
    KgCoOp$_{\text{(CVPR'23)}}$ & 70.66  & 93.92  & 89.83  & 65.41  & 70.01  & 86.36  & 22.51  & 66.16  & 46.35  & 46.04  & 68.50  & 65.51  \\
    PSRC$_{\text{(ICCV'23)}}$ & 71.27  & 93.60  & 90.25  & 65.70  & 70.25  & 86.15  & 23.90  & 67.10  & 46.87  & 45.50  & 68.75  & 65.81  \\
    CoPrompt$_{\text{(ICLR'24)}}$ & 70.80  & 94.50  & \cellcolor[HTML]{ebf0fa}90.73  & 65.67  & 72.30  & \cellcolor[HTML]{ebf0fa}86.43  & 24.00  & \cellcolor[HTML]{ebf0fa}67.57  & 47.07  & \cellcolor[HTML]{ebf0fa}51.90  & \cellcolor[HTML]{d9e6fa}\textbf{69.73} & \cellcolor[HTML]{ebf0fa}67.00  \\
    ProMetaR$_{\text{(CVPR'24)}}$ & 71.29  & 93.74  & 90.59  & \cellcolor[HTML]{ebf0fa}65.83  & 71.13  & 86.39  & 24.78  & 67.41  & 47.08  & 45.02  & \cellcolor[HTML]{ebf0fa}69.50  & 66.15  \\
    DeKgTCP$_{\text{(ICLR'25)}}$ & \cellcolor[HTML]{d9e6fa}\textbf{72.33}  & \cellcolor[HTML]{ebf0fa}94.73  & 90.02  & 65.49  & \cellcolor[HTML]{d9e6fa}\textbf{72.39}  & \cellcolor[HTML]{d9e6fa}\textbf{86.59}  & \cellcolor[HTML]{ebf0fa}25.05  & 67.19  & 44.47  & 51.37  & 68.78  & 66.61  \\
    TAP$_{\text{(ICLR'25)}}$ & \cellcolor[HTML]{ebf0fa}72.30  & 94.30  & 90.70  & 65.60  & 70.93  & 86.10  & 24.57  & \cellcolor[HTML]{d9e6fa}\textbf{68.30}  & \cellcolor[HTML]{d9e6fa}\textbf{50.20}  & 46.00  & 68.90  & 66.56  \\
    \midrule
    DroPLe$_{\text{(Ours)}}$ & 71.94  & \cellcolor[HTML]{d9e6fa}\textbf{94.77}  & \cellcolor[HTML]{d9e6fa}\textbf{91.06}  & \cellcolor[HTML]{d9e6fa}\textbf{66.18}  & \cellcolor[HTML]{ebf0fa}72.37  & 86.17  & \cellcolor[HTML]{d9e6fa}\textbf{26.36}  & 67.33  & \cellcolor[HTML]{ebf0fa}47.45  & \cellcolor[HTML]{d9e6fa}\textbf{52.63}  & 68.46  & \cellcolor[HTML]{d9e6fa}\textbf{67.28}  \\
    \bottomrule
    \end{tabular}%
}
\vspace{-3pt}
  \label{tab:cross_dataset}%
\end{table*}%

\begin{figure*}[!t]%[htbp]
    \centering    \includegraphics[width=1.0\linewidth]{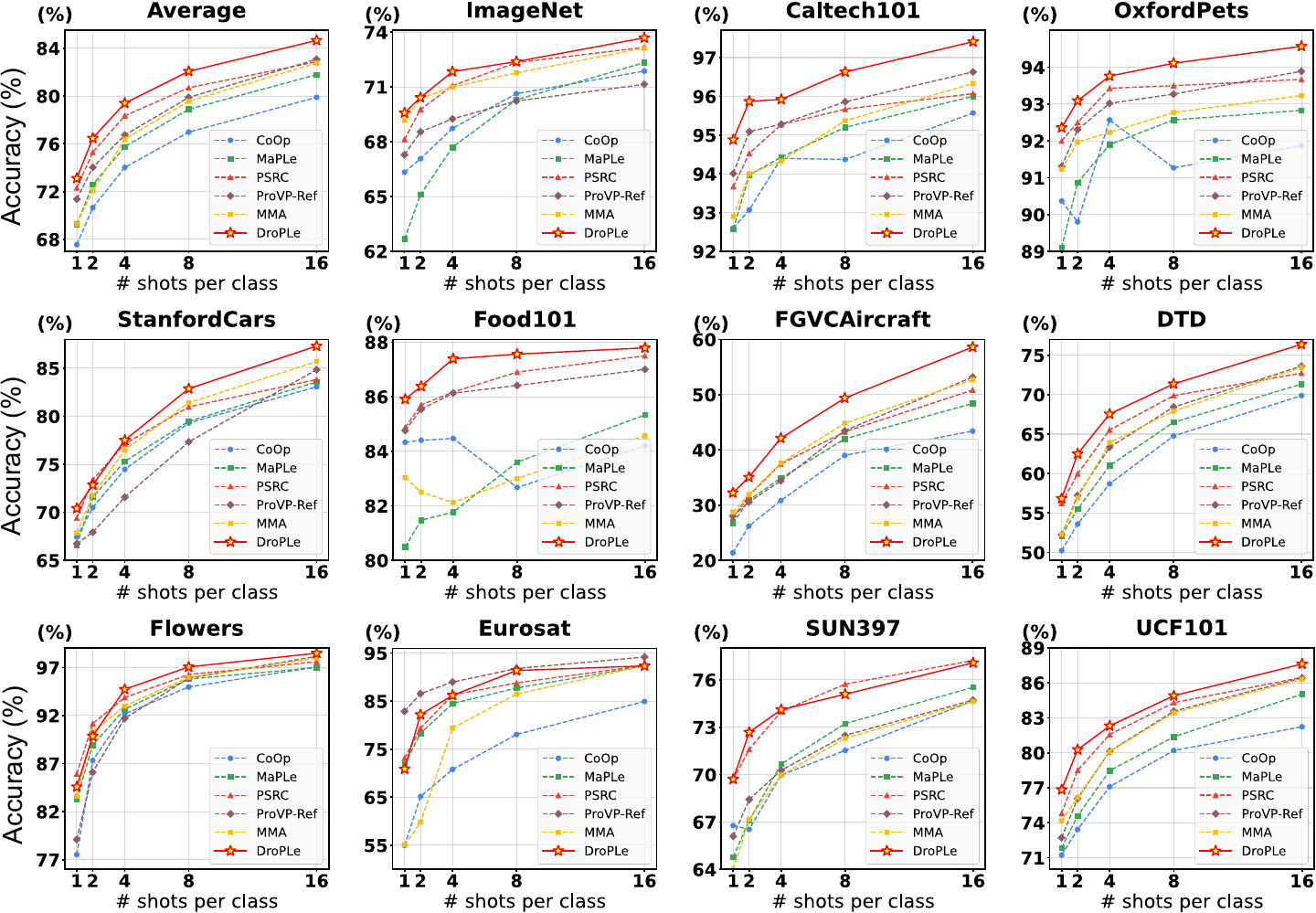}
    \vspace{-10pt}
    \caption{\textbf{Few-shot classification.} Performance comparison on 11 datasets across different shot settings.}
    \label{fig_few-shot2}
    \vspace{-7pt}
\end{figure*}

Considering the structure of Rademacher complexity bounds (e.g., Eq.~\eqref{eq:rad_prod_bound_proof_final_appendix}, where complexity generally increases with terms like $\Phi_i$ which relates to $||\mathbf{q}^{(i)}||_1$), the inequality $\Phi_i(\mathbf{q}_{\text{IWTD}}^{(i)}) \leq \Phi_i(\mathbf{q}_{\text{VTD}}^{(i)})$ suggests that IWTD leads to a smaller or equal Rademacher complexity:
\begin{align}
\hat{\mathcal{R}}_{\mathcal{X}}(\mathbb{F}_{\text{IWTD}}) \leq \hat{\mathcal{R}}_{\mathcal{X}}(\mathbb{F}_{\text{VTD}}) \label{eq:final_rademacher_comp_l1_appendix}.
\end{align}
Substituting Eq.~\eqref{eq:final_rademacher_comp_l1_appendix} into the general bound Eq.~\eqref{eq:std_gen_bound} indicates that IWTD can yield a tighter generalization bound than VTD when VTD is tuned for comparable empirical performance.
\end{proof}
Proposition.~\ref{prop:iwtd_rademacher_comparison} shows that IWTD can achieve tighter generalization bounds than VTD with comparable empirical risk, due to lower empirical Rademacher complexity.

\subsection{A.2. Theoretical Analysis of Residual Entropy Regularization}

\noindent\textbf{Theorem 1 (Complementary Design of Residual Entropy Regularization).} 
Residual entropy regularization operates complementarily to importance weighted token dropout by targeting different representational properties, enabling effective joint optimization.

\textit{Proof.} 
Let $z^o$ denote original features, $z^d$ denote post-dropout features, and $z^r = \frac{z^d - \lambda z^o}{1-\lambda}$ represent the residual component.
The total loss decomposes as:
\begin{equation}
\mathcal{L} = \mathcal{L}_{CE}(z^d) + \mathcal{L}_{RE}(z^r),
\end{equation}
where $\mathcal{L}_{CE}$ depends on $z^d$ and $\mathcal{L}_{RE}$ depends on $z^r$.
The gradient contributions satisfy:
\begin{align}
\frac{\partial \mathcal{L}_{CE}}{\partial p_j} &= \frac{\partial \mathcal{L}_{CE}}{\partial z^d} \cdot \frac{\partial z^d}{\partial p_j} \\
\frac{\partial \mathcal{L}_{RE}}{\partial z^r} &= \nabla \mathcal{H}(p(y|z^r)).
\end{align}
Due to the linear residual relationship where $z^o$ is frozen ($\frac{\partial z^o}{\partial \theta} = 0$), the two loss terms target different aspects: $L_{CE}$ optimizes task performance while $L_{RE}$ constrains residual entropy, enabling complementary optimization effects. $\square$

\noindent\textbf{Theorem 2 (Information-Theoretic Guarantee).} 
Residual entropy regularization ensures that dropout-induced perturbations contain no class-discriminative information.

\textit{Proof.} 
Let $Y$ denote the ground-truth class labels with $K$ classes. The mutual information $\mathcal{I}(Y; z^r)$ measures the statistical dependence between labels and residual components. By the definition of mutual information:
\begin{equation}
\mathcal{I}(Y; z^r) = \mathcal{H}(Y) - \mathcal{H}(Y|z^r).
\end{equation}
Therefore, maximizing conditional entropy $\mathcal{H}(Y|z^r)$ is equivalent to minimizing mutual information $\mathcal{I}(Y; z^r)$:
\begin{equation}
\max \mathcal{H}(Y|z^r) = \max \mathcal{H}(Y) - \mathcal{I}(Y; z^r) \Rightarrow \min \mathcal{I}(Y; z^r).
\end{equation}
When $\mathcal{I}(Y; z^r) \rightarrow 0$, the residual $z^r$ becomes statistically independent of labels $Y$:
\begin{equation}
p(y|z^r) \approx p(y) = \frac{1}{K}.
\end{equation}
This guarantees that residual components $z^r$ contain only non-discriminative information, preserving semantic alignment. $\square$

\textbf{Corollary.} IWTD controls token retention through dropout probabilities $p_j$, while residual entropy constrains perturbation distributions via $\max \mathcal{H}(Y|z^r)$. These mechanisms target different aspects of the learned representations and work synergistically.

\begin{table*}
  \centering
  \caption{\textbf{Few-shot classification across different VLM architectures.} Comparison with several few-shot learning methods on 11 datasets. Best results highlighted in \colorbox[HTML]{d9e6fa}{\textbf{first}}, \colorbox[HTML]{ebf0fa}{second}.}
  \renewcommand{\arraystretch}{0.96}
  \scalebox{1}{
    \begin{tabular}{llccccccccccc>{\columncolor[HTML]{f8f8f8}}c}
    \toprule
          &       & \multicolumn{12}{c}{\textbf{16-Shot Classification}} \\
\cmidrule{3-14}    \textbf{Model} & \multicolumn{1}{l}{\textbf{Method}} & \rotatebox{60}{\textbf{ImgNet}} & \rotatebox{60}{\textbf{Cal101}} & \rotatebox{60}{\textbf{Pets}}   & \rotatebox{60}{\textbf{Cars}}  & \rotatebox{60}{\textbf{Flowers}} & \rotatebox{60}{\textbf{Food}}  & \rotatebox{60}{\textbf{FGVC}}  & \rotatebox{60}{\textbf{SUN}}  & \rotatebox{60}{\textbf{DTD}}  & \rotatebox{60}{\textbf{SAT}}  & \rotatebox{60}{\textbf{UCF}}  & \rotatebox{60}{\textbf{\textit{Avg.}}}  \\
    \midrule
    \midrule
    \multirow{6}[2]{*}{ViT-B/32} & CoOp$_{\text{(IJCV'22)}}$ & 66.7  & 95.0  & 89.4  & 71.4  & 93.1  & 79.8  & 31.1  & 72.1  & 64.3  & 80.6  & 78.2  & 74.7  \\
          & CoCoOp$_{\text{(CVPR'22)}}$ & 66.0  & 94.3  & 91.0  & 64.6  & 82.5  & 81.9  & 22.6  & 69.8  & 59.7  & 70.4  & 75.3  & 70.7  \\
          & KgCoOp$_{\text{(CVPR'23)}}$ & 65.4  & 94.4  & 90.8  & 67.3  & 86.1  & 81.7  & 23.7  & 71.0  & 65.1  & 70.1  & 77.5  & 72.1  \\
          & MaPLe$_{\text{(CVPR'23)}}$ & 66.7  & 95.1  & \cellcolor[HTML]{ebf0fa}91.7  & 66.9  & 89.0  & \cellcolor[HTML]{ebf0fa}82.1  & 28.0  & 72.0  & 63.4  & \cellcolor[HTML]{ebf0fa}83.3  & 77.3  & 74.1  \\
          & MMA$_{\text{(CVPR'24)}}$ & \cellcolor[HTML]{ebf0fa}68.0  & \cellcolor[HTML]{ebf0fa}95.6  & 91.5  & \cellcolor[HTML]{ebf0fa}73.5  & \cellcolor[HTML]{ebf0fa}94.3  & 81.4  & \cellcolor[HTML]{ebf0fa}34.0  & 74.0  & \cellcolor[HTML]{ebf0fa}68.9  & 80.1  & \cellcolor[HTML]{ebf0fa}81.7  & \cellcolor[HTML]{ebf0fa}76.6  \\
          \midrule
          & DroPLe$_{\text{(Ours)}}$ & \cellcolor[HTML]{d9e6fa}\textbf{68.7}  & \cellcolor[HTML]{d9e6fa}\textbf{96.7}  & \cellcolor[HTML]{d9e6fa}\textbf{92.8}  & \cellcolor[HTML]{d9e6fa}\textbf{75.6}  & \cellcolor[HTML]{d9e6fa}\textbf{94.8}  & \cellcolor[HTML]{d9e6fa}\textbf{84.5}  & \cellcolor[HTML]{d9e6fa}\textbf{40.3}  & \cellcolor[HTML]{d9e6fa}\textbf{76.3}  & \cellcolor[HTML]{d9e6fa}\textbf{71.7}  & \cellcolor[HTML]{d9e6fa}\textbf{83.5}  & \cellcolor[HTML]{d9e6fa}\textbf{83.4}  & \cellcolor[HTML]{d9e6fa}\textbf{78.9}  \\
    \midrule
    \midrule
    \multirow{6}[2]{*}{ViT-L/14} & CoOp$_{\text{(IJCV'22)}}$ & 78.1  & 97.5  & 94.5  & 87.4  & \cellcolor[HTML]{ebf0fa}98.6  & 90.2  & 53.0  & 77.9  & 73.7  & \cellcolor[HTML]{d9e6fa}\textbf{86.7}  & 86.7  & 84.0  \\
          & CoCoOp$_{\text{(CVPR'22)}}$ & 77.8  & 97.4  & 95.4  & 82.7  & 95.3  & 91.9  & 45.2  & 76.7  & 71.4  & 79.8  & 85.2  & 81.7  \\
          & KgCoOp$_{\text{(CVPR'23)}}$ & 76.8  & 97.4  & 95.3  & 83.2  & 96.4  & 91.7  & 47.5  & 76.7  & 73.6  & 83.6  & 86.4  & 82.6  \\
          & MaPLe$_{\text{(CVPR'23)}}$ & 78.4  & 97.2  & 95.4  & 83.6  & 97.4  & 92.0  & 46.3  & 78.8  & 72.7  & \cellcolor[HTML]{ebf0fa}85.4  & 86.5  & 83.1  \\
          & MMA$_{\text{(CVPR'24)}}$ & \cellcolor[HTML]{ebf0fa}79.9  & \cellcolor[HTML]{ebf0fa}97.6  & \cellcolor[HTML]{ebf0fa}95.5  & \cellcolor[HTML]{ebf0fa}88.0  & 98.4  & \cellcolor[HTML]{ebf0fa}92.0  & \cellcolor[HTML]{ebf0fa}56.4  & \cellcolor[HTML]{ebf0fa}80.2  & \cellcolor[HTML]{ebf0fa}75.8  & 76.3  & \cellcolor[HTML]{ebf0fa}88.0  & \cellcolor[HTML]{ebf0fa}84.4  \\
          \midrule
          & DroPLe$_{\text{(Ours)}}$ & \cellcolor[HTML]{d9e6fa}\textbf{80.4}  & \cellcolor[HTML]{d9e6fa}\textbf{98.3}  & \cellcolor[HTML]{d9e6fa}\textbf{96.3}  & \cellcolor[HTML]{d9e6fa}\textbf{89.1}  & \cellcolor[HTML]{d9e6fa}\textbf{98.9}  & \cellcolor[HTML]{d9e6fa}\textbf{93.5}  & \cellcolor[HTML]{d9e6fa}\textbf{60.8}  & \cellcolor[HTML]{d9e6fa}\textbf{82.6}  & \cellcolor[HTML]{d9e6fa}\textbf{76.7}  & 84.6  & \cellcolor[HTML]{d9e6fa}\textbf{88.9}  & \cellcolor[HTML]{d9e6fa}\textbf{86.4}  \\
    \bottomrule
    \end{tabular}%
    }
  \label{tab:different_vits}%
  \vspace{-3pt}
\end{table*}%

\section{B. Additional Experiments}
\subsection{B.1. Cross-dataset Evaluation}\label{cross-dataset_eval}
We evaluate the cross-dataset generalization capability on 10 diverse target datasets. As shown in Table~\ref{tab:cross_dataset}, DroPLe achieves 67.28\% average accuracy, outperforming recent works like TAP (66.56\%), DeKgTCP (66.61\%) and CoPrompt (67.00\%). Our method shows notable improvements on some dataset, with a +1.31\% over DeKgTCP on FGVCAircraft (26.36\%) and +0.73\% over CoPrompt on EuroSAT (52.63\%). The results reveal that our importance-guided token dropout effectively balances feature preservation and diversity, leading to better cross-dataset transferability.

\subsection{B.2. Complete Few-shot Classification Results}
We present the comprehensive few-shot classification results across all 11 datasets in Fig.~\ref{fig_few-shot2}. DroPLe demonstrates strong few-shot learning capabilities, particularly excelling on several challenging datasets. At 16 shots per class, DroPLe achieves notable performance with 73.69\% on ImageNet, 97.41\% on Caltech101, 94.57\% on OxfordPets, 87.31\% on StanfordCars, 87.79\% on Food101, 58.62\% on FGVCAircraft, 76.38\% on DTD, 98.49\% on Flowers102, 77.08\% on SUN397, and 87.62\% on UCF101, respectively. The method shows competitive results across different shot settings, with our importance weighted token dropout providing effective regularization in data-limited scenarios. These results validate that our adaptive dropout strategy successfully maintains crucial semantic information while introducing beneficial diversity for improved generalization in few-shot learning.

\subsection{B.3. Cross-Architecture and Adapter Evaluation}\label{app:cross_arch}

\noindent\textbf{Different VLM Architectures.} To validate the applicability of our method across different VLMs, we conducted few-shot classification experiments on CLIP with ViT-B/32 and ViT-L/14 architectures. As shown in Table~\ref{tab:different_vits}, compared to several prompt learning methods in few-shot scenarios, our approach achieved the highest average accuracy across 11 datasets on both backbones, reaching 78.9\% and 86.4\% respectively, outperforming the second-best method by 2.3\% and 2.0\%. This further demonstrates the robustness of our dropout prompt learning when dealing with limited samples.

\noindent\textbf{Adapter-based Methods.} To validate DroPLe's generalizability beyond prompt learning, we evaluate its effectiveness with adapter-based methods on ImageNet 16-shot classification. As shown in Table~\ref{tab:adapter_methods}, DroPLe consistently improves various adapter-based methods across ViT-B/16 and ViT-B/32 architectures. Specifically, DroPLe achieves improvements of +0.36\%/+0.72\% with Tip-Adapter~\cite{zhang2022tip}, +0.46\%/+0.15\% with DAC-V~\cite{gondal2024domain}, and +0.15\%/+0.41\% with DAC-VT~\cite{gondal2024domain} on ViT-B/16/ViT-B/32 respectively. For Tip-Adapter-F~\cite{zhang2022tip}, DroPLe maintains comparable performance on ViT-B/16 while achieving +0.31\% improvement on ViT-B/32. These results demonstrate that our importance-weighted token dropout mechanism effectively enhances various parameter-efficient adaptation approaches, confirming the broad applicability of our method across different adaptation paradigms.

\begin{table}[htbp]
  \centering
  \caption{16-shot classification performance of \textbf{adapter-based methods} on ImageNet.}
  \label{tab:adapter_methods}
  \vspace{-3pt}
  % \hspace{-10pt}
  \begin{minipage}{0.22\textwidth}
    \centering
    \small
    % (c) Initial \textbf{balance para-\\meter} $\lambda_0$.
    \setlength{\tabcolsep}{1.5pt}
    \scalebox{0.95}{
    \begin{tabular}{lcc}
    \toprule
    Models & ViT-B/16 & ViT-B/32 \\
    \midrule
    Tip-Adapter & 70.83  & 65.60  \\
    \rowcolor[HTML]{e9f0fb}
    \ \ +DroPLe & 71.19  & 66.32  \\
    \midrule
    Tip-Adapter-F & 73.70  & 78.74  \\
    \rowcolor[HTML]{e9f0fb}
    \ \ +DroPLe & 73.66  & 79.05  \\
    \bottomrule
    \end{tabular}
    }
  \end{minipage}%
  \hspace{13pt}
  \begin{minipage}{0.22\textwidth}
    \centering
    \small
    % \hspace{20pt}
    % (d) OOD~generalization~of \textbf{other approaches}.
    \setlength{\tabcolsep}{1.5pt}
    \scalebox{0.95}{
    \begin{tabular}{lcc}
    \toprule
    Models & ViT-B/16 & ViT-B/32 \\
    \midrule
    DAC-V & 72.98  & 67.77  \\
    \rowcolor[HTML]{e9f0fb}
    \ \ +DroPLe & 73.44  & 67.92  \\
    \midrule
    DAC-VT & 74.59  & 69.64  \\
    \rowcolor[HTML]{e9f0fb}
    \ \ +DroPLe & 74.74  & 70.05 \\
    \bottomrule
    \end{tabular}%
    }
  \end{minipage}
  \vspace{-5pt}
\end{table}

\subsection{B.4. Additional Visual Analysis}\label{appendix:cam_vis}
To provide additional visual analysis beyond the main paper, we present extended Grad-CAM visualizations comparing our importance weighted token dropout with vanilla dropout across multiple examples. 
As shown in Fig.~\ref{cam_appendix}, our method consistently produces more concentrated and focused activation maps compared to vanilla dropout across diverse image categories. For the car images (left columns), our importance weighted token dropout shows clear attention on the vehicle's structure and distinctive features, while vanilla dropout exhibits more scattered attention patterns. In the flower images (right columns), our method demonstrates precise localization on the flower's center and petals, whereas vanilla dropout shows diffuse activation across less relevant regions. The heat map visualizations clearly indicate that our adaptive token preservation strategy maintains stronger focus on semantically relevant object parts, validating that our multimodal importance-based approach effectively preserves critical visual features while providing regularization benefits.

\section{C. Residual Design: Linear vs. Non-linear}
The linear residual formulation in Eq.~(\ref{eq:residual_def}) enables clean separation of dropout-induced variations from original representations. We analyze this design choice through both theoretical and empirical perspectives.

\noindent\textbf{Invertibility Requirement.} The linear formulation $z^d = \lambda z^o + (1-\lambda)z^r$ maintains invertibility: $z^r = \frac{z^d - \lambda z^o}{1-\lambda}$, allowing precise extraction of the residual component. This property is essential for applying residual entropy regularization $\max \mathcal{H}(Y|z^r)$, as the method requires direct access to $z^r$ for effective regularization.

\noindent\textbf{Non-linear Alternative.} We examine a non-linear alternative where $z^d$ is computed via a multi-layer perceptron that takes concatenated features $[z^o; z^r]$ as input. This approach lacks invertibility, as given $z^d$ and $z^o$, we cannot uniquely recover $z^r$. The ability to directly access $z^r$ is essential for applying residual entropy regularization $\max \mathcal{H}(Y|z^r)$, as established in Theorem 2. Without access to the true residual component, residual entropy regularization becomes ineffective. Moreover, the non-linear transformation introduces significantly higher computational overhead, contradicting the efficiency principles of prompt learning. Even invertible neural networks could theoretically provide access to residuals, they introduce substantial computational overhead due to their requirement for specialized architectures and activation recomputation during backpropagation~\cite{jacobsen2018revnet}.

\begin{figure}[t]%[!t]
    \centering    \includegraphics[width=1.0\linewidth]{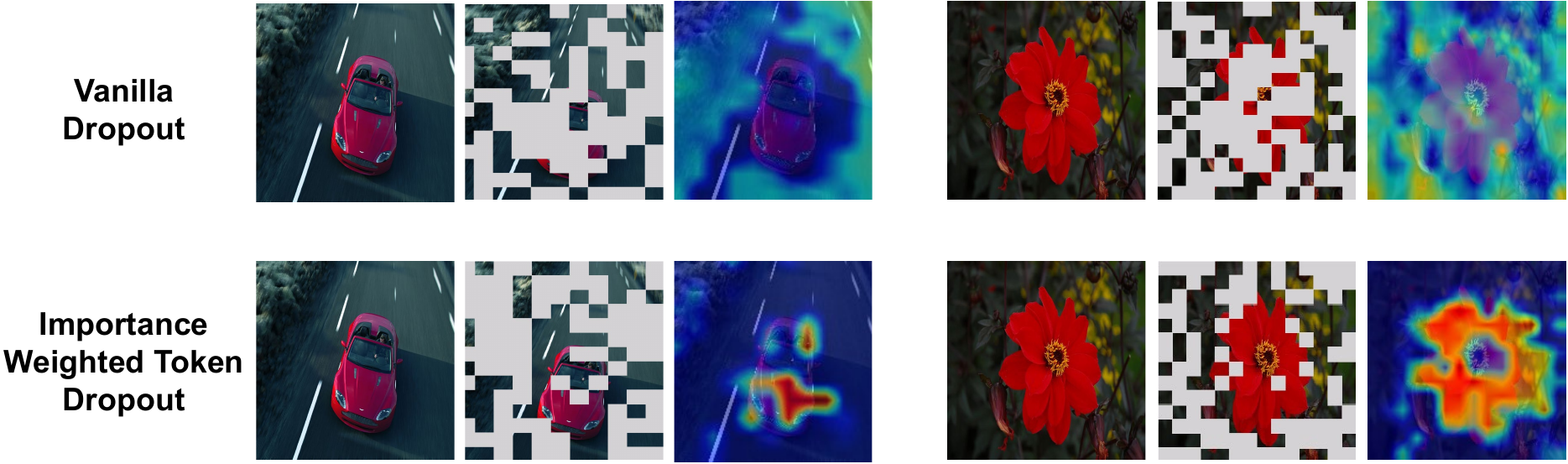}
    \caption{\textbf{Grad-CAM visualizations for different dropout methods.} The first row shows Vanilla dropout, and the second row shows Importance Weighted Token Dropout. Redder colors indicate higher feature attention.}
    \label{cam_appendix}
\end{figure}

\begin{table}[t]
\centering
\caption{Linear vs. non-linear residual formulation comparison.}
\label{tab:linear_vs_nonlinear}
\begin{tabular}{lccc}
\toprule
Method & Base & Novel & HM \\
\midrule
\rowcolor[HTML]{e9f0fb}
Linear & 86.12 & 78.44 & 82.10 \\
Non-linear & 85.41 & 76.82 & 80.89 \\
\bottomrule
\end{tabular}
\end{table}

\begin{algorithm*}[t]
\caption{Dropout Prompt Learning (DroPLe)}
\label{alg:drople_polished} % 修改了标签以避免与旧版本冲突
\begin{algorithmic}[1] % [1] 用于行号显示
    \Require Pre-trained VLM $(\mathcal{F}, \mathcal{G})$, Dataset $D$, Learnable Prompts $\Theta_P = \{P_v, P_t\}$, MaxEpoch $E$, Hyperparameters $\lambda$, $\gamma$, $p_{min}, p_{max}$.
    \State Initialize learnable prompts $\Theta_P$.
    \For{$epoch = 1 \ldots E$}
        \For{\text{each batch} $(x_v, x_t) \in D$}
            \State $U_v, U_t \gets \text{InputTokens}(x_v, x_t, \Theta_P)$. \Comment{Token sequences with prompts}
            \State $U_v^d \gets \text{ApplyIWTD}(U_v, \text{`vision'}, p_{min}, p_{max})$. \Comment{Apply IWTD to visual tokens}
            \State $U_t^d \gets \text{ApplyIWTD}(U_t, \text{`text'}, p_{min}, p_{max})$. \Comment{Apply IWTD to textual tokens}
            \State $z_v^d, z_t^d \gets \mathcal{F}(U_v^d), \mathcal{G}(U_t^d)$. \Comment{Encode features after IWTD}
            \State $z_v^o, z_t^o \gets \mathcal{F}(U_v), \mathcal{G}(U_t)$. \Comment{Encode original features (no dropout)} 
            % \Comment{\textit{Residual Entropy Regularization ($\mathcal{L}_{RE}$)}} \cite{RER_section_ref1}
            \State $z_v^r \gets (z_v^d - \lambda z_v^o) / (1-\lambda)$. \Comment{Visual residual component} from (\ref{eq:residual_def})
            \State $z_t^r \gets (z_t^d - \lambda z_t^o) / (1-\lambda)$. \Comment{Textual residual component} from (\ref{eq:residual_def})
            \State $\mathcal{L}_{RE}^v \gets -\mathcal{H}(P(Y | z_v^r, \{\mathcal{G}(\text{text}_k)\}_k))$. \Comment{Visual residual entropy from (\ref{entropy_re})} 
            \State $\mathcal{L}_{RE}^t \gets -\mathcal{H}(P(Y | z_t^r, \{\mathcal{F}(\text{image}_k)\}_k))$. \Comment{Textual residual entropy from (\ref{entropy_re})} 
            \State $\mathcal{L}_{CE} \gets \text{ComputeTaskLoss}(z_v^d, z_t^d, \text{labels})$.
            \State $\mathcal{L}_{total} \gets \mathcal{L}_{CE} + \mathcal{L}_{RE}^v + \mathcal{L}_{RE}^t$. 
            \State Update prompts $\Theta_P$ using $\nabla_{\Theta_P} \mathcal{L}_{total}$.
        \EndFor
    \EndFor
    % \State \Return Optimized Prompts $\Theta_P$.
\end{algorithmic}
\end{algorithm*}

\noindent\textbf{Empirical Comparison.} We compare linear and non-linear formulations on base-to-novel generalization. As shown in \ref{tab:linear_vs_nonlinear}, the linear approach achieves 82.10\% harmonic mean, outperforming the non-linear alternative by 1.21\%. This performance gap confirms that the loss of invertibility in non-linear formulations prevents effective residual entropy regularization. When the true residual component $z^r$ cannot be precisely isolated, the entropy maximization objective fails to remove class-discriminative information from dropout perturbations, leading to degraded generalization performance.

\section{D. Comparison with Unimodal Adaptive Dropout Methods}
While our IWTD method builds upon adaptive dropout techniques, it extends beyond traditional unimodal approaches by jointly considering both intra-modal and inter-modal token importance. To validate the effectiveness of this multimodal design, we compare IWTD with representative unimodal adaptive dropout methods applied to VLMs.

\noindent\textbf{Baseline Adaptations.} We implement three unimodal adaptive dropout variants: (1) \textbf{AD-DROP-Uni}: applies gradient-based attribution~\cite{yang2022ad} separately to visual and textual tokens within each modality; (2) \textbf{Group-Wise-Uni}: adapts the density-based grouping method~\cite{ke2020group} by applying PCA projection and grid-based density estimation independently to visual and textual token representations; (3) \textbf{StandOut-Uni}: implements learnable dropout probabilities~\cite{ba2013adaptive} for each modality separately. Critically, all these methods operate within individual modalities and cannot capture cross-modal semantic alignment.

\noindent\textbf{Results and Analysis.} Table~\ref{tab:unimodal_comparison} presents the comparison results on base-to-novel generalization. While unimodal adaptive methods show improvements over vanilla dropout, they significantly underperform our IWTD approach. Notably, Group-Wise-Uni achieves only 78.89\% HM, demonstrating that intra-modal feature density alone is insufficient for VLM regularization. AD-DROP-Uni performs better at 79.25\% HM but still falls short of IWTD's 82.10\% HM.

The superior performance of IWTD can be largely attributed to its cross-modal awareness. While unimodal methods may drop tokens critical for image-text alignment (e.g., dropping visual tokens of ``car'' when the text mentions ``car''), IWTD considers cross-modal semantic relationships through $S_{cross}$ with intra-modal importance scores, ensuring semantically aligned tokens across modalities are preserved together. This comprehensive multimodal coordination is not achievable with purely intra-modal adaptive dropout strategies.

\noindent\textbf{Ablation on Cross-modal Component.} To isolate the contribution of cross-modal awareness, we compare IWTD with and without the cross-modal attention score $S_{cross}$. Removing $S_{cross}$ (using only $S_{self}$ and $S_{cls}$) reduces performance to 81.14\% HM, demonstrating the importance of cross-modal importance evaluation in distinguishing our approach from traditional adaptive dropout methods.

\begin{table}[t]
\centering
\caption{Comparison with unimodal adaptive dropout methods on base-to-novel generalization.}
\vspace{-3pt}
\label{tab:unimodal_comparison}
\renewcommand{\arraystretch}{0.95}
\begin{tabular}{lccc|c}
\toprule
Method & Base & Novel & HM & $\Delta$HM \\
\midrule
Dropout & 82.71 & 73.45 & 77.81 & - \\
AD-DROP-Uni & 83.95 & 75.25 & 79.25 & +1.44 \\
Group-Wise-Uni & 83.51 & 74.84 & 78.89 & +1.08 \\
StandOut-Uni & 83.79 & 75.11 & 79.09 & +1.28 \\
\midrule
\rowcolor[HTML]{e9f0fb}
\textbf{IWTD (Ours)} & \textbf{86.12} & \textbf{78.44} & \textbf{82.10} & \textbf{+4.29} \\
\bottomrule
\end{tabular}
\vspace{-3pt}
\end{table}

% \pagebreak
\section{E. Algorithm}
We present our Dropout Prompt Learning algorithm in Algorithm~\ref{alg:drople_polished}. After initializing prompts in step 1, each training iteration proceeds as follows. First, in steps 4 to 8, we generate input tokens, apply importance weighted token dropout, and encode both dropout-modified and original features. Next, in steps 9 to 12, we compute feature residuals and their associated entropy losses for residual entropy regularization. Finally, in steps 13 to 15, we calculate a task-specific loss, combine it with the entropy losses to form the total training objective, and update the learnable prompts.

\end{document}